\documentclass{article}

\usepackage{float}
\usepackage{microtype}
\usepackage{graphicx}
\usepackage{subcaption}
\usepackage{booktabs} 
\usepackage{tikz}
\usetikzlibrary{arrows.meta,positioning}
\usepackage{float}
\usepackage{hyperref}

\usepackage[preprint]{icml2026}

\usepackage{amsmath}
\usepackage{amssymb}
\usepackage{mathtools}
\usepackage{amsthm}

\usepackage[capitalize,noabbrev]{cleveref}

\theoremstyle{plain}
\newtheorem{theorem}{Theorem}[section]

\newtheorem{lemma}[theorem]{Lemma}
\newtheorem{corollary}[theorem]{Corollary}
\theoremstyle{definition}

\theoremstyle{remark}

\usepackage[textsize=tiny]{todonotes}

\icmltitlerunning{Towards Infinitely Long Neural Simulations: Self-Refining Neural Surrogate Models for Dynamical Systems}

\begin{document}

\twocolumn[
  \icmltitle{Towards Infinitely Long Neural Simulations: Self-Refining Neural Surrogate Models for Dynamical Systems }



  \icmlsetsymbol{equal}{*}

  \begin{icmlauthorlist}

    \icmlauthor{Qi Liu}{courant}
    \icmlauthor{Laure Zanna}{courant}
    \icmlauthor{Joan Bruna}{courant}
  \end{icmlauthorlist}

  \icmlaffiliation{courant}{Courant Institute School of Mathematics, Computing, and Data Science, New York University, USA}

  \icmlcorrespondingauthor{Qi Liu}{ql2221@nyu.edu}

  \icmlkeywords{Machine Learning, ICML}

  \vskip 0.3in
]



\printAffiliationsAndNotice{}  

\begin{abstract}
 Recent advances in autoregressive neural surrogate models have enabled orders-of-magnitude speedups in simulating dynamical systems. However, autoregressive models are generally prone to \textit{distribution drift}: compounding errors in autoregressive rollouts that severely degrade generation quality over long time horizons. Existing work attempts to address this issue by implicitly leveraging the inherent trade-off between short-time accuracy and long-time consistency through hyperparameter tuning. In this work, we introduce a unifying mathematical framework that makes this tradeoff explicit, formalizing and generalizing hyperparameter-based strategies in existing approaches. Within this framework, we propose a robust, hyperparameter-free model implemented as a conditional diffusion model that balances short-time fidelity with long-time consistency by construction. Our model, \textit{Self-refining Neural Surrogate model} (\textit{SNS}), can be implemented as a standalone model that refines its own autoregressive outputs or as a complementary model to existing neural surrogates to ensure long-time consistency. We also demonstrate the numerical feasibility of \textit{SNS} through high-fidelity simulations of complex dynamical systems over arbitrarily long time horizons.
\end{abstract}

\section{Introduction}
Understanding and simulating physical systems is a foundational aspect of many scientific and engineering fields, including earth system modeling, neural science, and robotics \citep{lorenz1963,chariker2016a,ijspeert2013}. Dynamical systems are the major mathematical tools used to model these systems, which describe how the states of the system evolve over time through partial differential equations (PDEs).  Traditionally, these PDEs are solved numerically, which requires fine spatial discretizations to capture the local dynamics accurately. However, fine spatial discretization imposes constraints on the temporal discretization as well, such as the Courant-Friedrichs-Lewy (\textit{CFL}) condition for numerical stability, leading to unfeasible computational costs \cite{Lewy1928}.

Deep learning techniques have been incorporated in various ways to address this challenge. In particular, autoregressive neural surrogate models have been deployed to replace numerical schemes entirely \citep{li2020, stachenfeld2022learned, kochkov2024}. Autoregressive neural surrogates, compared to their numerical counterparts, can achieve orders of magnitude computational speedups by reducing the number of function evaluations and leveraging parallel processing on GPUs \cite{Kurth2023}. These neural surrogate models are being applied to weather forecasting \citep{pathak2022, bi2023} and climate modeling \citep{kochkov2024, subel2024, dheeshjith2025}. 

\begin{figure*}
    \centering
    \includegraphics[width = 0.99\linewidth]{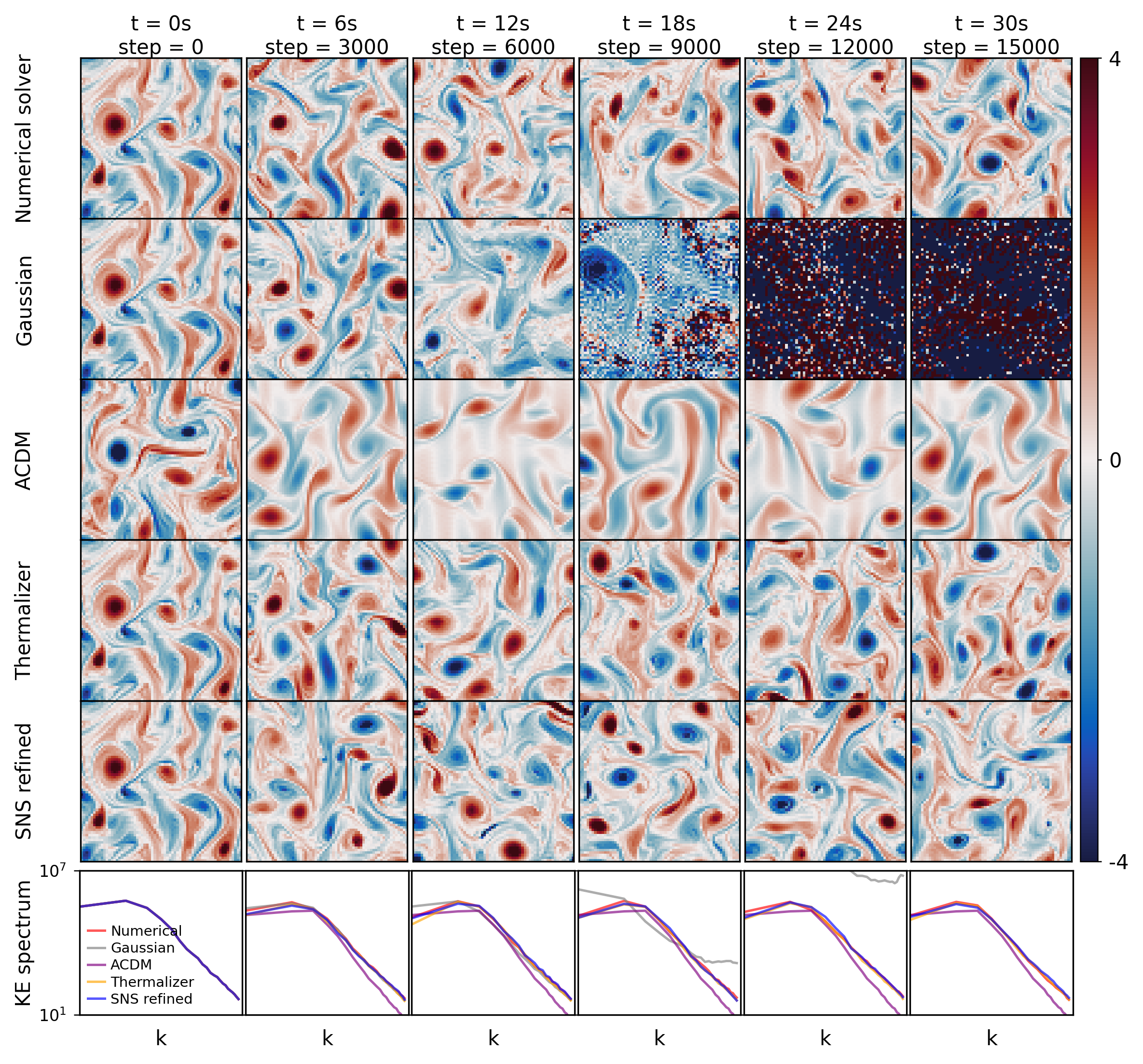}
    \caption{\textit{Top 5 rows:} Vorticity fields of the Kolmogorov flow from a numerical solver, Gaussian approximation of the transition density, \textit{ACDM} with 200 denoising steps, \textit{Thermalizer} and \textit{SNS} refined trajectories for Kolmogorov flow over a trajectory of $15,000$ steps. \textit{Bottom row:} kinetic energy spectra at each timestep, averaged over $20$ randomly initialized trajectories.}\label{fig:kol_traj}
\end{figure*} 

Autoregressive models take their own outputs as inputs. Despite the tremendous success of autoregressive neural surrogate models in achieving short-term accuracy, the performance tends to degenerate over longer time horizons \citep{Chattopadhyay2023,Bonavita2024,Parthipan2024,Bach2024}. Autoregressive models are prone to \textit{distribution drift}: they are trained on data from a distribution, but there is no guarantee that their outputs will remain within it during inference. This motivates decomposing the risk of an autoregressive model into \textit{conditional approximation error} and \textit{out-of-distribution} (\textit{OOD}) \textit{error}. Many existing efforts to enhance long rollouts' performance implicitly balance a tradeoff between these two errors. However, most prior work relies on heuristic strategies that yield strong empirical results but offer limited theoretical justification.

In this work, we will provide a general mathematical framework, based on \textit{multi-noise-level denoising oracle}, that explicitly accounts for the trade-off between \textit{conditional approximation error} and \textit{OOD error}. Estimators of the \textit{multi-noise-level denoising oracle} create a flexible design space for seeking an optimal balance between the two errors. We propose our model, \textit{Self-refining Neural Surrogate model} (\textit{SNS}), which seeks to achieve the optimal balance between the errors within this framework.  During inference, \textit{SNS} dynamically determines the noise level for the conditioning and the denoising strategies. We implement \textit{SNS} as a conditional diffusion model \citep{SohlDickstein2015, song2020} trained on a denoising diffusion probabilistic model (DDPM) style denoising score matching objective \cite{Ho2020}. \textit{SNS} can be used as a standalone model that refines its own output or as a complementary model alongside existing neural surrogates to enhance long-term performance. We demonstrate the numerical feasibility of \textit{SNS} through high fidelity simulations over a long time horizon.  

\subsection*{Main contributions}
We propose the \textit{multi-noise-level denoising oracle} with an explicit representation of the approximation-\textit{OOD} tradeoff to transform a heuristic goal into a concrete, learnable objective. We present our \textit{self-refining neural surrogate} model that seeks to dynamically balance the tradeoff within this framework.

\section{Related Work}
\label{sec2}

\subsection*{Diffusion models as surrogates}

Conditional diffusion models have been utilized by \textit{GenCast} \cite{kohlBenchmarkingAutoregressiveConditional2024} and autoregressive conditional diffusion model (\textit{ACDM})   \cite{price2023gencast} for probabilistic simulations of dynamical systems. \textit{ACDM} follows the \textit{Conditional Diffusive Estimator}(CDiffE) framework \cite{batzolis2021conditional}, in which the conditioning variables and target outputs are jointly diffused. In contrast, \textit{GenCast} conditions on clean and unperturbed inputs, following prior work on conditional diffusion models \citep{song2019generative, karras2022elucidating}. Our approach, \textit{SNS} , is also implemented as a conditional diffusion model; however, a key distinction is that both \textit{ACDM} and \textit{GenCast} condition their generation on previous autoregressive outputs under the assumption that these inputs remain in distribution. As a result, both models remain susceptible to \textit{distribution drift} arising from the accumulation of out-of-distribution (OOD) errors during rollout.

A well-known limitation of diffusion-based models (including ours) is their long inference time, which arises from degradation in sample quality when the reverse process is insufficiently discretized. Consequently, substantial effort has been devoted to accelerate diffusion inference, including consistency models \cite{song2023consist} and progressive distillation techniques \cite{salimans2022progressivedistillationfastsampling}. To the best of our knowledge, these approaches have not yet been applied to autoregressive conditional diffusion models for faster simulation. Several recent works propose alternative strategies to reduce computational cost. \cite{shehata2025improved} accelerates inference by skipping portions, especially the end, of the reverse process via Tweedie's formula. \textit{Dyffusion} \cite{cachay2023dyffusion} avoids solving the reverse stochastic differential equation by replacing the forward and backward processes with a learned interpolator and forecaster. For high-dimensional spaces, \cite{gao2024} reduces inference cost by performing diffusion in a learned low-dimensional latent space. \textit{PDE-Refiner} \cite{lippe2023pderefiner}, one of the first diffusion-based methods for PDE modeling, mitigates computational overhead by acting as a complementary refinement module on top of a point-wise autoregressive predictor. Similar to \textit{PDE-Refiner}, our model, \textit{SNS}, can also work around the computational cost of diffusion-based approaches by dynamically truncating the initial stages of the reverse process with a rough next state estimate from a point-wise autoregressive predictor. 

\subsection*{Mitigating Distribution Drift}

Recent work has focused on mitigating \textit{distribution drift} in autoregressive models by exploiting properties of the stationary distribution \citep{jiang2023training, schiff2024dyslim, pedersen2025}. In particular, the \textit{Thermalizer} framework  \cite{pedersen2025}, which dynamically maps out-of-distribution samples back to the stationary distribution through a diffusion model, is conceptually similar to our model. However, \textit{Thermalizer}'s objective is solely to ensure long-time stability while we consider the optimal tradeoff one can make to ensure stability while respecting temporal dynamics. Prior work by \cite{stachenfeld2022learned} demonstrated that employing a non-degenerate Gaussian approximation to the transition density improves long-time consistency. Moreover, unrolling this Gaussian approximation and minimizing mean squared error (MSE) over multiple time steps has been shown to further enhance long-horizon performance \citep{lusch2018a, Um2020, vlachas2020}. Conditioning the model on a larger temporal context has also proven effective in reducing \textit{distribution drift} \cite{nathaniel2026, zhang2025frame}. While our approach offers an alternative mechanism for extending the consistency horizon of simulations, our primary objective is to provide theoretical insight into the conditions and mechanisms that enable truly infinite-length generation.

\section{Problem Setup}
\label{sec3}
\subsection*{Background}
Consider a dynamical system in $ \mathbb{R}^n $ with an unknown forward operator $\mathcal{F} : \mathbb{R}^n \rightarrow \mathbb{R}^n$ and initial condition $\mathbf{x_0}$, s.t.
\begin{equation*}
    \partial_t \mathbf{x}(t) = \mathcal{F}(\mathbf{x}(t),t), \quad \mathbf{x}(0) = \mathbf{x}_0.
\end{equation*}
For a typical time scale $\Delta t$, we denote $\{\mathbf{x}_t\}_{t\in\mathbb{N}}$ to be the time-discretized snapshots of the system where $\mathbf{x}_t = \mathbf{x}(t ~\Delta t)$. Given samples of such snapshots as training data, the goal is to generate dynamically consistent trajectories given new initial conditions. 

The operator $\mathcal{F}$ is, in general, nonlinear and exhibits chaotic dynamics, i.e., small perturbations in initial states evolve to states that are considerably different \cite{lorenz1963}. To account for the sensitivity to uncertainty in initial conditions and numerical discretization errors, it is more appropriate to adopt a probabilistic framework of the system. 
We can view $\{\mathbf{x}_t\}_{t\in\mathbb{N}}$ as a discrete realization of a continuous time Markovian stochastic process, $\{\mathbf{X}_\tau\}_{\tau\in \mathbb{R}}$, i.e. $\mathbf{x}_t = \mathbf{X}_{t \Delta \tau}(\omega)$. We denote the path density of the process $p_t := p(\mathbf{x}_0,\dots,\mathbf{x}_t)$ which satisfies the recurrence relation: $p_{t+1} = p(\mathbf{x}_{t+1}, (t+1)\Delta \tau \mid \mathbf{x}_t, t\Delta \tau) ~  p_t$, where the transition density $p(\mathbf{x}_{t+1}, (t+1)\Delta \tau \mid \mathbf{x}_t, t\Delta \tau)$ is the conditional distribution of $\mathbf{X}_{(t+1)\Delta \tau}$ given $\mathbf{X}_{t \Delta \tau} = \mathbf{x}_{t}$. Here we assume that the Markov transition kernel admits a density, i.e. $\mathcal{Q}
(\mathbf{x}_{t},d(\mathbf{x}_{t+1})) = p(\mathbf{x}_{t+1} , (t+1)\Delta \tau\mid \mathbf{x}_{t}, t \Delta \tau)\,d\mathbf{x}_{t+1}$.  In addition, if the system is autonomous and ergodic, the transition density reduces to $p(\mathbf{x}_{t+1}\mid \mathbf{x}_t)$ and a unique limiting distribution exists which coincides with the stationary distribution. We denote this distribution as $\mu := \mu_\infty =  \lim_{t\rightarrow\infty} \mu_t$, where $\mu_t$ is the marginal distribution of $\mathbf{x}_t$, i.e. $\mu_t= \mathcal{Q}^t_\#(\mathbf{x}_{0},\mathbf{x}_{t})p(\mathbf{x}_{0})$. We choose to focus on ergodic systems in this work, but the method can be extended to time-dependent systems as well. Because of the Markovian properties, it suffices to learn the one-step conditional distribution and generate new trajectories by sampling from the learned distribution, $\hat{p}$, autoregressively as follows: given $\mathbf{x}_0$, sample $\hat{\mathbf{x}}_{t+1} \sim \hat{p}(\mathbf{x}_{t+1} \mid \hat{\mathbf{x}}_t)$ recursively with $\hat{\mathbf{x}}_0 = \mathbf{x}_0$.

\begin{figure}
    \centering
    \includegraphics[width=\linewidth]{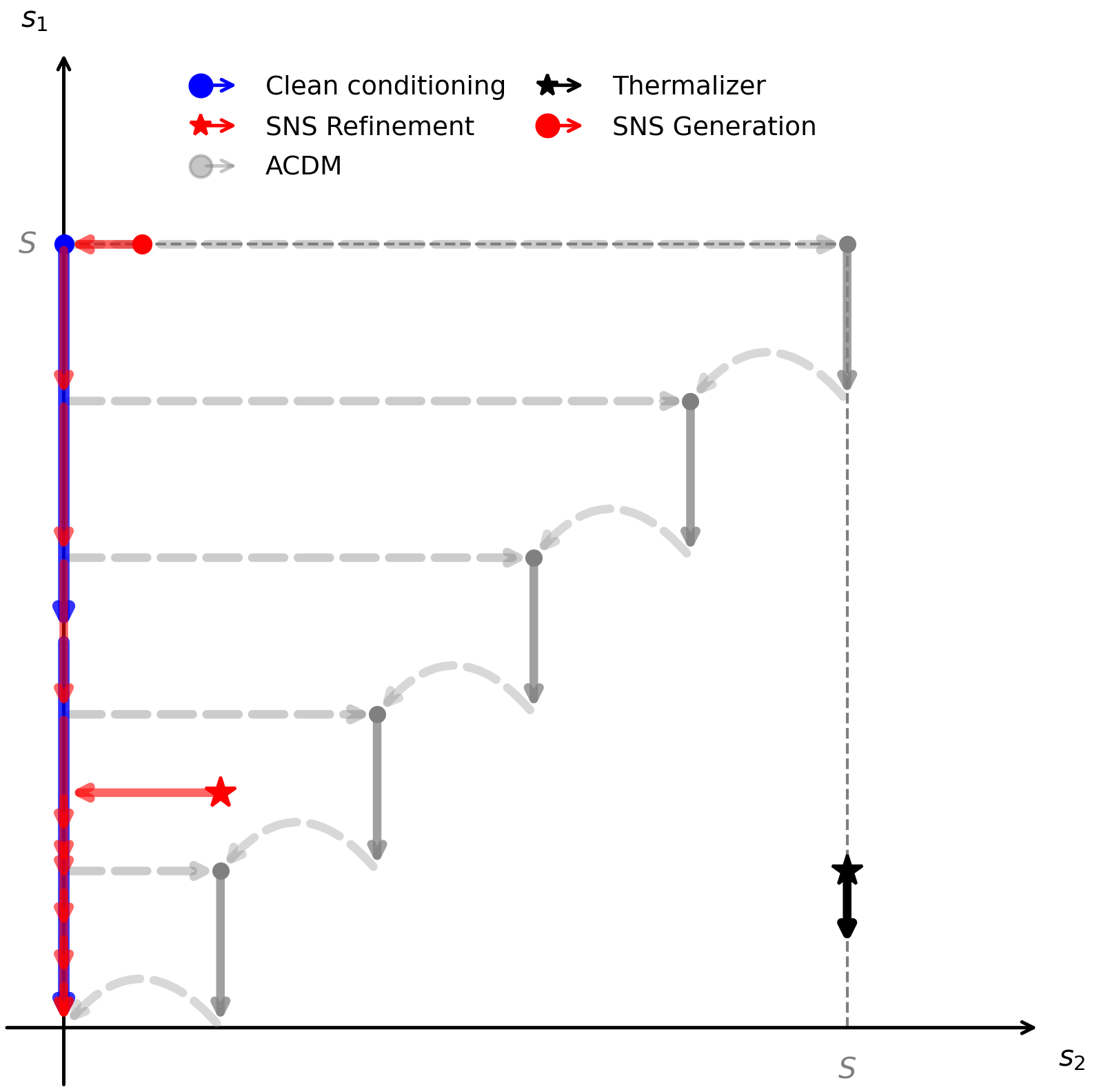}
    \caption{\textit{Phase space for the forward and reverse process} : Moving to the right and moving to the left corresponds to the forward process and the reverse process for $\mathbf{x}_{t-1}$. Moving up and down corresponds to the forward and the reverse process for $\mathbf{x}_t$, respectively. The dashed curved arrows represent a jump in the reverse process.}
    \label{fig:phase_space}
\end{figure}

\subsection*{Modeling transition density via diffusion models}
The transition density has the property that $p(\mathbf{x}_{t+1}\mid \mathbf{x}_{t}) = \delta(\mathbf{x}_{t+1} - \mathbf{x}_t)$ as $\Delta \tau \rightarrow 0$, i.e. the transition density is a Dirac mass at the current snapshot without time evolution. So in the asymptotics of $\Delta \tau \ll 1$, it is reasonable to approximate the transition density with a ``sharp'' gaussian distribution: $p(\mathbf{x}_{t+1}\mid \mathbf{x}_{t}) = \mathcal{N}(\mathbf{x}_{t+1}; \mathcal{F}(\mathbf{x}_t),\Sigma(\Delta \tau) )$ where $\Sigma(\Delta \tau)$ has vanishing spectral radius as $\Delta \tau \rightarrow 0$. At a fixed time discretization $\Delta \tau$, both the mean and the covariance matrix can be learned, and we denote the learned Gaussian kernel: $\hat{p} = (\mathbf{x}_{t+1}\mid \mathbf{x}_{t}) = \mathcal{N}(\mathbf{x}_{t+1}; \mathcal{F}_\theta(\mathbf{x}_t),\Sigma_\theta)$. We refer to this approach as the Gaussian approximation and provide more details about the learning objective in \ref{Training}. 

Conditional diffusion models can also be used to generate samples from the transition density $p(\mathbf{x}_t | \mathbf{x}_{t-1})$. Diffusion models \citep{SohlDickstein2015, Ho2020, song2020}, when first proposed, enable transport between a source distribution $p^S$, usually Gaussian, and an unconditional target distribution $p^0$ by a forward process and a learned reverse process. We will use $s$ for diffusion time as a superscript, which is independent of the physical time $t$. Given $\mathbf{x}^0 \sim p^0$, the forward process:
\begin{equation*}
    d\mathbf{x}^s = \mathbf{f}(\mathbf{x}^s,s)ds + g(s)d\mathbf{w}^s, \quad s \in [0,S]
\end{equation*}
is a stochastic process that starts at the target distribution and ends at the source distribution, i.e. $\mathbf{x}^0 \sim p^0, \mathbf{x}^S \sim p^S$. One could generate new samples from the target distribution by solving the SDE associated with the reverse process backwards in time (from $s=S$ to $s=0$) with initial condition $\mathbf{x}^S \sim p^1$\cite{anderson1982reverse}: 
\begin{equation*}
    d\mathbf{x}^s = [\mathbf{f}(\mathbf{x^s},s)- g(s)^2 \nabla \log p^s(\mathbf{x}^s)]ds + g(s)d\bar{\mathbf{w}}^s
\end{equation*}
where $p^s(\mathbf{x}^s)$ is the marginal density of $\mathbf{x}^s$. Solving the reverse SDE requires estimating the score, $\mathbf{s}_\theta \approx \nabla \log p^s(\mathbf{x}^s)$, which can be done via various training objectives. In particular, \cite{Ho2020} proposed a denoising objective. Defining the denoising oracle as:
\begin{equation}
D(\mathbf{x}^s, s) := \mathbb{E}[\mathbf{x}\mid \mathbf{x}^s]
\label{eq1}
\end{equation}
Tweedie's formula \cite{robbins1992empirical}, relates the denoising oracle to the score explicitly via: $\nabla \log p^s(\mathbf{x}^s) = -\sigma_s^{-2}(\mathbf{x}^s - D(\mathbf{x}^s,s))$, where $\sigma_s^2 = \int_{0}^s g(s')^2ds'$.

Conditional diffusion models enable sampling from a conditional distribution,  $p(\mathbf{x} \mid \mathbf{c})$. One could consider evolving the conditional reverse process SDE:
\begin{equation*}
    d\mathbf{x}^s = [\mathbf{f}(\mathbf{x^s},s)- g(s)^2 \nabla \log p^s(\mathbf{x}^s\mid \mathbf{c})]ds + g(s)d\bar{\mathbf{w}}^s
\end{equation*}
to generate samples from $p(\mathbf{x} \mid \mathbf{c})$. Multiple methods to estimate the conditional score $\nabla \log p^s(\mathbf{x}^s\mid \mathbf{c})$ have been proposed, such as via Bayes's rule \cite{dhariwal2021}, classifier-free guidance \cite{ho2022classifier}, and conditional score matching. In particular, \cite{batzolis2021conditional} considered the loss function,
\begin{equation*}
\begin{aligned}
\mathcal{L}(\theta)
= \frac{1}{2}\,
&\mathbb{E}_{s \sim \mathcal{U}(0,S)}
\mathbb{E}_{\mathbf{x}_0,\,\mathbf{x}_s \sim p_s(\mathbf{x}_s \mid \mathbf{x}_0)} \\
&\left[
    \lambda(t)\,
    \left\|
        \nabla \log p^s(\mathbf{x}^s \mid \mathbf{x}^0)
        - \mathbf{s}_\theta(\mathbf{x}_s, \mathbf{c}, s)
    \right\|_2^2
\right].
\end{aligned}
\end{equation*}
and showed that the minimizer of the above loss function coincides with the minimizer of the loss function with the score $\nabla \log p^s(\mathbf{x}^s \mid \mathbf{x}^0)$ replaced by the conditional score: $\nabla \log p^s(\mathbf{x}^s \mid \mathbf{c})$. Further more, one could consider learning the score of the annealed conditional distribution $\nabla \log p^s(\mathbf{x}^s\mid \mathbf{c}^s)$, which coincides with the score of the joint distribution $p^s(\mathbf{x}^s,\mathbf{c}^s)$, and use it as an approximation to $\nabla \log p^s(\mathbf{x}^s \mid \mathbf{c})$ as we will discuss in \cref{sec4.1}.

\subsection*{Distribution drift in autoregressive models}

To make the distinction between short-horizon prediction accuracy and long-horizon physical consistency precise, it is useful to consider a finite-dimensional Markov system. Let $M$ denote the true transition matrix and $\hat{M}$ its learned approximation, with stationary distributions $\mu$ and $\hat{\mu}$, i.e. the Perron (left) eigenvectors of $M$, and $\hat{M}$, respectively. This naturally leads to two distinct notions of error.

First, the \textit{conditional approximation error} measures how well one-step transitions are approximated under the true stationary distribution:
\begin{equation*}
\mathcal{E}_{\text{cond}}
=
\sum_i \mu_i \,
\mathcal{D}(M_i \,\|\, \hat{M}_i),
\end{equation*}
where $M_i$ denotes the $i$-th row of $M$ and $\mathcal{D}$ is a divergence such as KL or total variation. And $\mu_i$ is the i-th component of $\mu$.
Second, the \textit{out-of-distribution (OOD) error} measures the discrepancy between the stationary distributions induced by the true and learned transition density, with the same or possibly different divergence, $\tilde{D}$:
\begin{equation*}
\mathcal{E}_{\text{uncond}}
=
\tilde{\mathcal{D}}(\mu \,\|\, \hat{\mu}).
\end{equation*}

While the \textit{conditional approximation error} captures one-step accuracy with in-distribution conditioning, the unconditional prediction error governs long-horizon rollout behavior. Noting that the \textit{conditional approximation error} is a metric on conditional distributions and the \textit{OOD} error is a metric on marginal distributions, any candidate approximation $\hat{M}$ immediately faces an inherent tradeoff between accurately reconstructing short-time dynamics and maintaining long-time consistency. The minimizer of the \textit{conditional approximation error}, by definition, has to condition on an unperturbed conditioning since any corruption of the conditioning decreases the mutual information between the two variables (Appendix \ref{AppC}). Thus, conditional generation on the exact previous state enables accurate approximation of the one-step transition density $p(\mathbf{x}_t \mid \mathbf{x}_{t-1})$. However, even
small errors in the conditioning state accumulate during rollouts, leading to
\textit{distribution drift}.  This mechanism underlies the linear growth of pathwise KL divergence over time. \cite{pedersen2025} Conversely, weakening the dependence on the conditioning state improves robustness by biasing generation towards the marginal or stationary distribution, but at the cost of accuracy in modeling the temporal dynamics.

This tradeoff can be formalized by interpolating between conditional and marginal
distributions. Introducing noise into the conditioning variable yields a family of
intermediate conditional distributions $\{p^{s_2}(\mathbf{x}_t \mid \mathbf{x}_{t-1}^{s_2})\}_{s_2\in[0,S]}$, which
recover the exact transition density in the limit $s_2 \to 0$ and marginal density $p(\mathbf{x}_t)$ as
$s_2 \to S$. Therefore, the choice of noise level directly controls the tradeoff
between temporal fidelity and long-term stability. 

Most existing approaches implicitly select a fixed operating point along this tradeoff
curve through hyperparameters or architectural choices. In contrast, our approach
explicitly models the full phase space via a \textit{multi-noise-level denoising oracle},
enabling adaptive selection of the appropriate balance during inference.

\section{Methodology}
\subsection{Multi-noise-level denoising oracle}
\label{sec4.1}
Denoting $\mathbf{y}_t := (\mathbf{x}_t,\mathbf{x}_{t-1})$, we define the \textit{multi-noise-level denoising oracle} to be the conditional expectation: 
\begin{equation}
    \mathbf{D}(\mathbf{x}^{s_1}_t, \mathbf{x}^{s_2}_{t-1}) := \mathbb{E}_{\mathbf{x}_t,\mathbf{x}_{t-1} \sim p(\mathbf{x}_t,\mathbf{x}_{t-1})}[\mathbf{y}_t \mid \mathbf{x}^{s_1}_t, \mathbf{x}_{t-1}^{s_2}] 
\end{equation}
where $p(\mathbf{x}_t,\mathbf{x}_{t-1})$ is the joint distribution of $(\mathbf{x}_t,\mathbf{x}_{t-1})$ and $\mathbf{x}_t^{s_1}$, $\mathbf{x}_{t-1}^{s_2}$ are independent realizations of the forward process: 
\begin{equation*}
    d\mathbf{x}^s = \mathbf{f}(\mathbf{x}^s,s)ds + g(s)d\mathbf{w}^s, \quad s \in [0,s_i]\quad  i= 1,2
\end{equation*}
with initial conditions $\mathbf{x}_t^0 = \mathbf{x}_t$, $\mathbf{x}_{t-1}^0 = \mathbf{x}_{t-1}$ respectively. It is important to note that although the forward processes are independent, $\mathbf{x}_t^{s_1}$ and $\mathbf{x}_{t_1}^{s_2}$ are not independent since the initial conditions are drawn from their joint distribution. This is the fundamental mathematical object underlying our method that allows a tradeoff between \textit{conditional approximation error} and \textit{OOD error}. The motivation for the \textit{multi-noise-level denoising oracle} is analogous to the tradeoff between conditional and marginal distribution discussed in \cref{sec3}. In the limit of $s_2 \rightarrow S$, the denoising oracle reduces to the unconditional denoising oracle as defined in equation \eqref{eq1}. In the limit of $s_2 \rightarrow 0$, the denoising oracle can be interpreted as a conditional denoising oracle which corresponds to the expected value of the transition density. The interpolation of these limits is exactly the tradeoff between \textit{approximation} error and \textit{OOD error}. \cref{fig3} demonstrates the tradeoff through an estimator of the \textit{multi-noise-level denoising oracle}.

\begin{figure}
    \centering
    \includegraphics[width=0.8\linewidth]{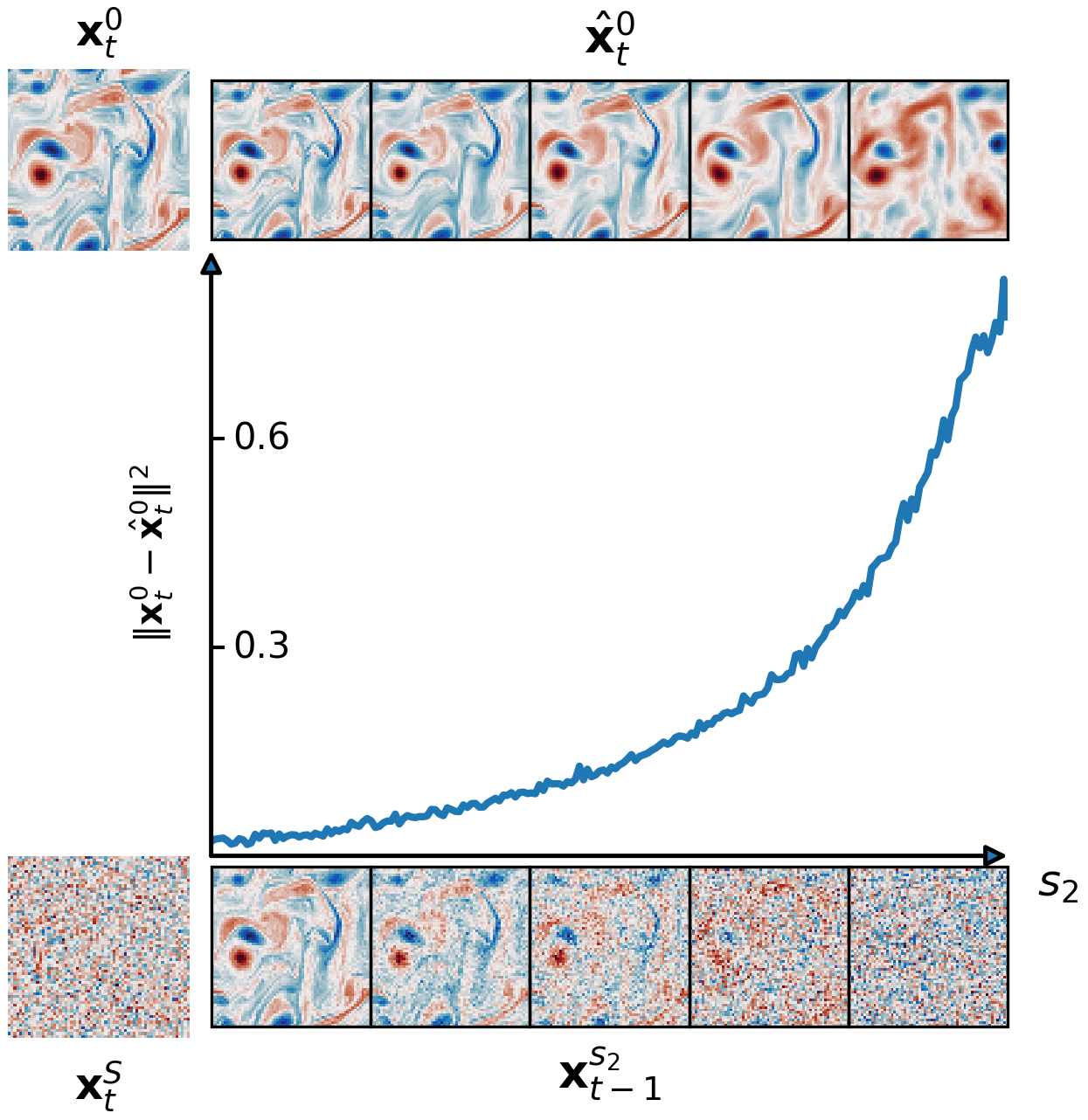}
    \caption{The x-axis corresponds to the diffusion time of the forward process for the conditional variable. The images below the x-axis are realizations of the forward process with the same initial condition $\mathbf{x}_{t-1}^{0}$ at different times $s$. The images in the top row are in direct correspondence with the images below via an estimator of the \textit{multi-noise-level denoising oracle} $\hat{\mathbf{x}}_t^0 =\mathbf{D}_{\theta}(\mathbf{x}_t^S, \mathbf{x}_{t-1}^{s})$. The y-axis in the $L^2$ distance between the denoised image and the original image. The degradation in denoising accuracy and the smoothing of fields with higher noise injection in the conditional variable align with the theory.}
    \label{fig3}
\end{figure}

Given the \textit{multi-noise-level denoising oracle} and initial condition $\hat{\mathbf{x}}_0 = \mathbf{x}_0$, one could use it directly for generating trajectories by following the recurrence relation $(\hat{\mathbf{x}}_t, \hat{\mathbf{x}}_{t-1}) = \mathbf{D}(\mathbf{z}_t, \mathbf{x}_{t-1}^0)$ where $\mathbf{z}_t \sim p^1$. However, this generates a trajectory based on point-wise estimates, and imperfect approximations of the oracle are still subject to \textit{OOD error} as a degenerate version of the Gaussian approximation of the transition density discussed in \cref{sec3}.

In order to enable probabilistic modeling, we define the \textit{multi-noise-level score}, $\nabla \log p^\mathbf{s}(\mathbf{y}^{\mathbf{s}}_t)$, to be:
\begin{equation*}
     (\nabla_{\mathbf{x}_t} \log p^{\mathbf{s}}(\mathbf{y}^{\mathbf{s}}_t),\nabla_{\mathbf{x}_{t-1}} \log p^\mathbf{s}(\mathbf{y}^{\mathbf{s}}_t))
\end{equation*}
where $\mathbf{s} = (s_1, s_2)$, $\mathbf{y}^\mathbf{s}_t = (\mathbf{x}_t^{s_1}, \mathbf{x}_{t-1}^{s_2})$ and $p^{\mathbf{s}}(\mathbf{y}^{\mathbf{s}}_t)$ is the marginal distribution of $(\mathbf{x}_t^{s_1}, \mathbf{x}_{t-1}^{s_2})$ in the forward process.
Analogous to the univariate case, we can relate the \textit{multi-noise-level denoising oracle} to the \textit{multi-noise-level score} via:
\begin{equation*}
    \nabla \log p^\mathbf{s}(\mathbf{y}^{\mathbf{s}}_t) = -\mathbf{\Sigma}_\mathbf{s}^{-1}( \mathbf{y}_t -  \mathbf{D}(\mathbf{x}^{s_1}_t, \mathbf{x}^{s_2}_{t-1}))
\end{equation*}
where $\mathbf{\Sigma}_\mathbf{s}$ is a diagonal matrix with entries $\mathbf{\Sigma_s}_{ii} = \int _0^{s_i} g^2(s')ds', i =1,2.$
Then, one could consider the coupled reverse process SDEs:
\begin{equation*}
\begin{aligned}
d\mathbf{x}_{t}^{s_1} =&
\Big[\mathbf{f}(\mathbf{x}_t^{s_1
},s_1)
- g(s_1)^2 \nabla_{\mathbf{x}_t} \log p^{\mathbf{s}}(\mathbf{y}^{\mathbf{s}}_t)  \Big]\,ds_1 \\
&+ g(s_1)\,d\bar{\mathbf{w}}^{s_1} 
\\
d\mathbf{x}_{t-1}^{s_2} =&
\Big[\mathbf{f}(\mathbf{x}_{t-1}^{s_2
},s_2)
- g(s_2)^2 \nabla_{\mathbf{x}_{t-1}} \log p^{\mathbf{s}}(\mathbf{y}^{\mathbf{s}}_t)  \Big]\,ds_2 \\
&+ g(s_2)\,d\bar{\mathbf{w}}^{s_2}
\end{aligned}
\end{equation*}
The SDEs are coupled through the \textit{multi-noise-level score}, but it poses no constraints on the coupling of the diffusion time $s_1$ and $s_2$. The evolution of the reverse process SDEs can be best understood as traversing the two-dimensional phase space of diffusion times. Given any point $(\tilde{s}_1, \tilde{s}_2) \in \mathbb{R}^+\times \mathbb{R}^+$, one could start the reverse process SDEs at any $(\mathbf{x}_t^{s_1}, \mathbf{x}_{t-1}^{s_2}) \sim p^\mathbf{s}(\mathbf{x}_t^{s_1}, \mathbf{x}_{t-1}^{s_2})$ and end at any point $(s_1', s_2')$, s.t. $s_i' \leq \tilde{s}_i$, following monotonically decreasing curves in $s_1$ and $s_2$. Many existing diffusion-based neural surrogate models can be interpreted as different strategies of traversing the phase space. \cref{fig:phase_space} demonstrates the corresponding evolution of the SDE under \textit{ACDM} \cite{kohlBenchmarkingAutoregressiveConditional2024}, \textit{thermalizer} \cite{pedersen2025}, and \textit{GenCast} (clean conditioning) \cite{price2023gencast}. 

Estimating the \textit{multi-noise-level score} can be done by minimizing the denoising score matching objective:
\begin{equation}
\begin{aligned}
\mathcal{L}(\theta) := &\mathbb{E}_{s_1,s_2 \sim \mathcal{U}(0,S), \mathbf{y}_t^0,\,\mathbf{y}^{\mathbf{s}} \sim p^\mathbf{s}(\mathbf{y}^\mathbf{s} \mid \mathbf{y}_t^0)}\\
&\| \nabla \log p_\mathbf{s}(\mathbf{y}^{\mathbf{s}}_t\mid \mathbf{y}_t^0) - \mathbf{\Phi}_\theta(\mathbf{y}_{t}^{\mathbf{s}}, s_1, s_2) \|^2
\end{aligned}
\label{eq3}
\end{equation}

In Appendix \ref{AppC}, we show that the minimizer of \cref{eq3} is the \textit{multi-noise-level score} and it coincides with the minimizer of a similar objective where the \textit{multi-noise-level score} is replaced with:
\begin{equation*}
  (\nabla_{\mathbf{x}_t} \log p^{s_1}(\mathbf{x}_{t}^{s_1} \mid \mathbf{x}_{t-1}^{s_2} ), \nabla_{\mathbf{x}_{t-1}} \log p^{s_2}(\mathbf{x}_{t-1}^{s_2} \mid \mathbf{x}_{t}^{s_1} ))
\end{equation*}

and the expectation is taken over $(\mathbf{x}_t^{s_1}, \mathbf{x}_{t-1}^{s_2}) \sim p^\mathbf{s}(\mathbf{x}_t^{s_1}, \mathbf{x}_{t-1}^{s_2})$. This provides us with another perspective on the reverse process. We can consider $\nabla_{\mathbf{x}_t} \log p^{s_1}(\mathbf{x}_{t}^{s_1} \mid \mathbf{x}_{t-1}^{s_2}$ as an approximation to the score of the transition density, $ \nabla p(\mathbf{x}_t \mid \mathbf{x}_{t-1})$, and independently evolve 
\begin{equation*}
\begin{aligned}
d\mathbf{x}_t^{s_1} =&
\Big[\mathbf{f}(\mathbf{x}_t^{s_1},s_1)
- g(s_1)^2 \nabla_{\mathbf{x}_t} \log p^{s_1}(\mathbf{x}_{t}^{s_1} \mid \mathbf{x}_{t-1}^{s_2}) \Big]\,ds_1 \\
& + g(s_1)\,d\bar{\mathbf{w}}^{s_1}
\end{aligned}
\end{equation*}
backward in time by drawing realizations of $\mathbf{x}_{t-1}^{s_2}$ from the forward process. Analogous to the case for \textit{multi-noise-level} denoising oracle, this approximation of the score introduces a bias but provides a better guarantee in long-term performance. Once again confirming the interpretation of $s_2$ as the variable controlling the tradeoff. This naturally raises the following question: \emph{what is the optimal choice of
starting point and traversal strategy in the reverse phase space
$(s_1,s_2)$ that best balances conditional approximation error against out-of-distribution (OOD) error?} We address this question by
constructing a conditional diffusion model that estimates both the \textit{multi-noise-level score} and the noise level all at once.  

\subsection{\textit{Self-refining Neural Surrogate model}}
\begin{algorithm}[tb]
  \caption{\textit{SNS} for refinement }
  \label{alg:refine}
  \begin{algorithmic}
  \REQUIRE Initial state $\mathbf{x}_0$, trained \textit{SNS} $\mathbf{\Phi}_{\theta}$, trained surrogate model $\mathbf{\Psi}_\theta$
    \FOR{$t=1$ {\bfseries to} $T$}
      \STATE $\mathbf{x}_t = \Psi_\theta(\mathbf{x}_{t-1})$
      \STATE $\hat{s}_i = \mathbf{\Phi}^{(i)}(\mathbf{x}_t,\mathbf{x}_{t-1}),\quad i = 1,2$
      \STATE $\mathbf{z}_1,\mathbf{z}_2 \sim \mathcal{N}(\mathbf{0},\mathbf{I})$
      \STATE $\mathbf{x}_{t}^{\hat{s}_1} = \sqrt{\bar{\alpha}_{\hat{s}_1}}\mathbf{x}_{t} + \sqrt{1-\bar{\alpha}_{\hat{s}_1}}\mathbf{z}_1$
      \STATE $\mathbf{x}_{t-1}^{\hat{s}_2} = \sqrt{\bar{\alpha}_{\hat{s}_2}}\mathbf{x}_{t-1} + \sqrt{1-\bar{\alpha}_{\hat{s}_2}}\mathbf{z}_2$
      \STATE $\mathbf{x}_{t-1}^{s} \leftarrow \mathbf{x}_{t-1}^{\hat{s}_2}$
      \FOR{$s=\hat{s}_2$ {\bfseries  to} $0$}
        \STATE $\mathbf{z}\sim\mathcal{N}(\mathbf{0},\mathbf{I})$
        \STATE $\mathbf{x}_{t-1}^{s-1} =
        \frac{1}{\sqrt{\alpha_s}}\!\left(
          \mathbf{x}_{t-1}^{s}
          - \frac{1-\alpha_s}{\sqrt{1-\bar{\alpha}_s}}\;
          \mathbf{\Phi}^{(4)}_\theta(\mathbf{x}_t^{\hat{s}_1},\mathbf{x}_{t-1}^{s})
        \right) + \sqrt{\beta_s}\,\mathbf{z}$
      \ENDFOR
      \STATE $\mathbf{x}_{t-1} \leftarrow \mathbf{x}_{t-1}^{0}$

      \vspace{2pt}
      \STATE $\mathbf{x}_t^{s} \leftarrow \mathbf{x}_t^{\hat{s}_1}$
      \FOR{$s=\hat{s}_1$ {\bfseries down to} $0$}
        \STATE $\mathbf{z}\sim\mathcal{N}(\mathbf{0},\mathbf{I})$
        \STATE $\mathbf{x}_t^{s-1} =
        \frac{1}{\sqrt{\alpha_s}}\!\left(
          \mathbf{x}_t^{s}
          - \frac{1-\alpha_s}{\sqrt{1-\bar{\alpha}_s}}\;
          \mathbf{\Phi}^{(3)}_\theta(\mathbf{x}_t^{s},\mathbf{x}_{t-1})
        \right) + \sqrt{\beta_s}\,\mathbf{z}$
      \ENDFOR
      \STATE $\mathbf{x}_t \leftarrow \mathbf{x}_t^{0}$

    \ENDFOR
  \end{algorithmic}
\end{algorithm}

\begin{figure*}
    \centering
    \includegraphics[width = 0.99\linewidth]{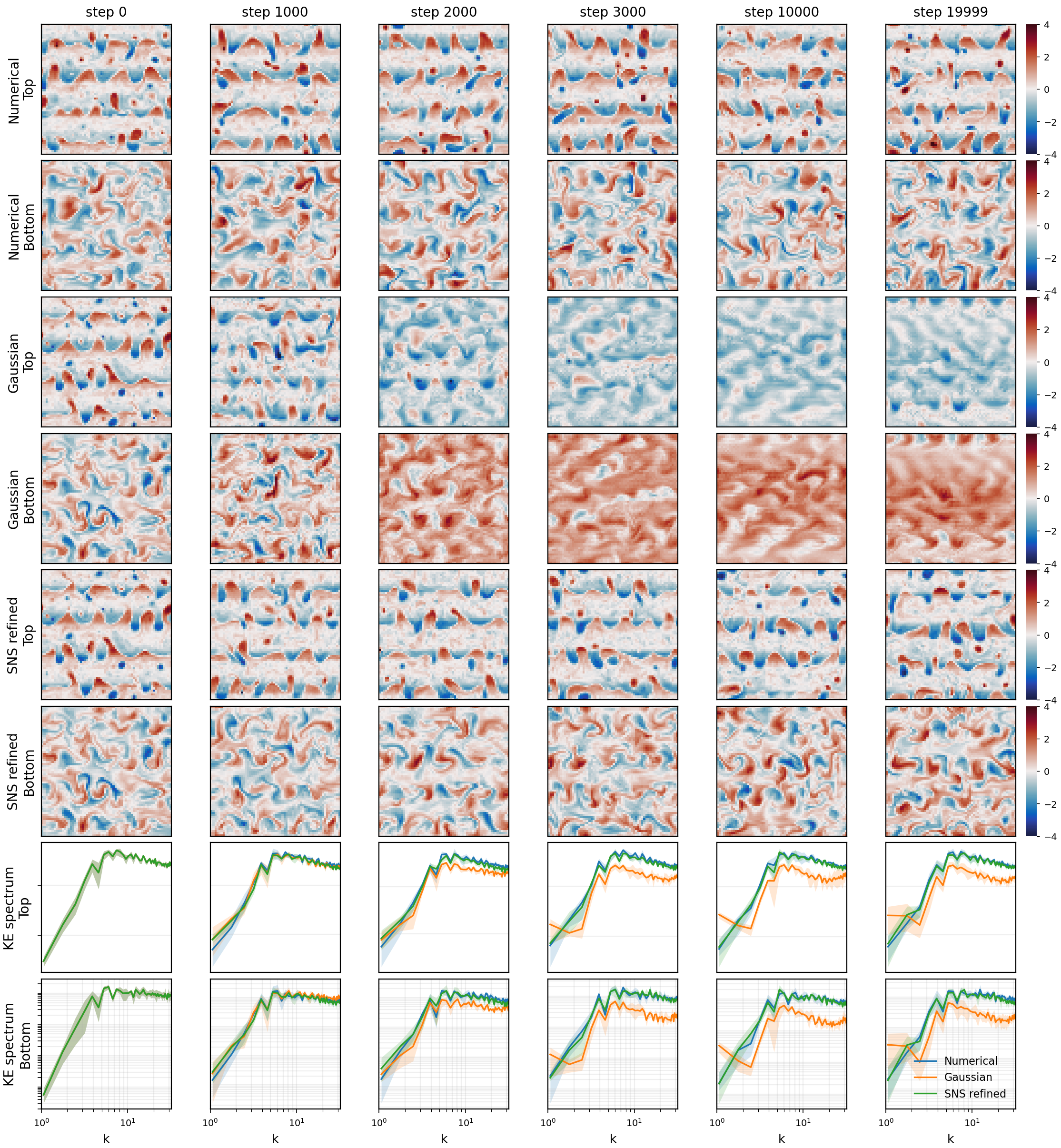}
    \caption{\textit{Top 6 rows:} Top and bottom layer vorticity fields of the two-layer QG system from a numerical solver, Gaussian approximation of the transition density, and \textit{SNS} refined trajectories for Kolmogorov flow over a trajectory of $20,000$ steps. \textit{Bottom 2 rows:} kinetic energy spectra of top and bottom layer at each timestep, averaged over $5$ randomly initialized trajectories, with min-max spread shaded. }\label{fig:QG_SNS_traj}
\end{figure*}
Given an initial condition $\hat{\mathbf{x}}_0$ and an estimator of the
\textit{multi-noise-level score}, one may generate autoregressive snapshots by
following any monotone path in diffusion-time space that starts at
$(S,\cdot)$ and terminates at $(0,\cdot)$. The remaining degree of freedom is
the evolution of the conditioning noise level $s_2$. In the idealized setting,
sampling from the true transition density $p(\mathbf{x}_t \mid \mathbf{x}_{t-1})$
corresponds to fixing $s_2 \equiv 0$ throughout the reverse process. However,
conditioning on an unperturbed autoregressive history is precisely what leads to
\textit{distribution drift} in practice, as confirmed by our numerical experiments and
previous work \cite{kohlBenchmarkingAutoregressiveConditional2024}.

To mitigate this effect, we do not assume the conditioning input to remain
in-distribution during rollouts. Instead, we dynamically determine an
appropriate starting point in the $(s_1,s_2)$ phase space and evolve the reverse
process toward the endpoint $(0,0)$. By leveraging the \textit{multi-noise-level score},
the model progressively refines both the current state and its conditioning,
effectively transporting the autoregressive input back toward the data manifold
during generation. This self-correcting mechanism motivates the name
\textit{Self-refining Neural Surrogate} (\textit{SNS}). The full algorithmic
implementation is provided in the appendix; below, we give a high-level overview
of the \textit{SNS} framework.
\begin{figure*}[t]
    \centering
    \includegraphics[width = \linewidth]{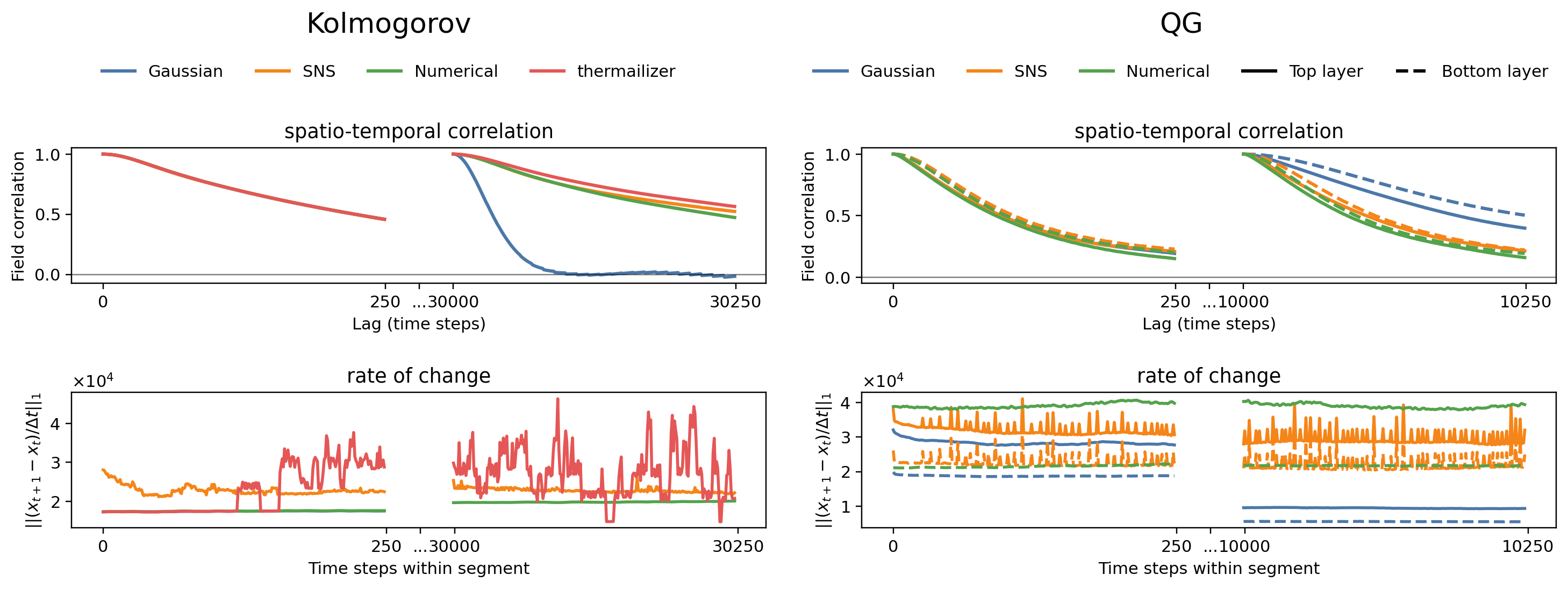}
    \caption{For a trajectory $\mathbf{x}_t \in \mathbb{R}^{d}$, we report two temporal consistency metrics. The spatio-temporal correlation at lag $\tau$ is computed as $C(\tau) = \left\langle \langle \mathbf{x}_t - \bar{\mathbf{x}}_t,\; \mathbf{x}_{t+\tau} - \bar{\mathbf{x}}_{t+\tau} \rangle\|\mathbf{x}_t - \bar{\mathbf{x}}_t\|_2^{-1} \, \|\mathbf{x}_{t+\tau} - \bar{\mathbf{x}}_{t+\tau}\|_2^{-1}\right\rangle_{t,\mathrm{IC}}$ where $\bar{\mathbf{x}}_t$ is the spatial mean of $\mathbf{x}_t$, and $\langle \cdot \rangle_{t,\mathrm{IC}}$ denotes averaging over time and initial conditions. The rate of change is computed from the discrete time derivative, $R(t)=\left\|\mathbf{x}_{t+1} - \mathbf{x}_t/{\Delta t}\right\|_1,$ and we plot the mean of $R(t)$ over different random initial conditions. The blue curves disappear in the bottom left plot for two different reasons. From 0 to 250, the blue curve is not visible because it overlaps with the green one, indicating strong temporal accuracy of the point-wise estimate in the short rollouts. From 30000 to 30250, the blue curve is missing because the point-wise estimates are returning NaN from the blow up in scale when the distribution drifted too far away from the stationary distribution.}\label{fig:temporal_metric}
\end{figure*} 

We estimate the \textit{multi-noise-level score} and the noise levels with a neural network, $\mathbf{\Phi}_\theta:\mathbb{R}^d \times \mathbb{R}^d \rightarrow \mathbb{R}^d \times \mathbb{R}^d \times \mathbb{R}^{2S}$. Denoting the outputs $\mathbf{\Phi}_\theta^{(3)}\in\mathbb{R}^d \times \mathbb{R}^d, (\mathbf{\Phi}_\theta^{(1)}, \mathbf{\Phi}_\theta^{(2)}) \in \mathbb{R}^{2S}$, the loss function to minimize is:
\begin{equation}
\begin{aligned}
   \mathcal{L}(\theta) &:= \mathbb{E}\{\| \nabla \log p_\mathbf{s}(\mathbf{y}^{\mathbf{s}}_t\mid \mathbf{y}_t^\mathbf{0}) - \mathbf{\Phi}^{(3)}_\theta(\mathbf{y}^{\mathbf{s}}_t) \|^2 \\
&+\lambda\sum_{s_1,s_2}[ \mathbf{1}_{s_1}\log \mathbf{\Phi}^{(1)}_\theta (\mathbf{y}^{\mathbf{s}}_t) + \mathbf{1}_{s_2}\log \mathbf{\Phi}^{(2)}_\theta (\mathbf{y}^{\mathbf{s}}_t)]\}
\label{loss}
\end{aligned}
\end{equation}
where $S$ is the number of discretization steps of the forward diffusion process and $\mathbf{1}_{s_i}$ are one-hot vectors encoding the true noise levels and the expectation is taken over $s_1, s_2 \sim U(\mathbb{N}[0,S]), \quad \mathbf{y}^{\mathbf{s}}_t, \mathbf{y}^{\mathbf{0}}_t \sim p^{\mathbf{s}}(\mathbf{y}^{\mathbf{s}}_t\mid \mathbf{y}^{\mathbf{0}}_t)$. We find that choosing the regularization weight, $\lambda$, to keep the two terms on a comparable scale works well.

Given the previous autoregressive output $\hat{\mathbf{x}}_{t-1}$, \textit{SNS} can be used directly for generation by evolving the reverse process SDE from $(S,\mathbf{\Phi}^{(2)}_\theta(\mathbf{d}_t))$ where $\mathbf{d}_t = (\mathbf{z}_t,\hat{\mathbf{x}}_{t-1}), \mathbf{z}_t \sim p^S$. Alternatively, one could also refine the autoregressive output by evolving the reverse coupled SDEs from $(\mathbf{\Phi}^{(1)}_\theta(\hat{\mathbf{y}}_{t}),\mathbf{\Phi}^{(2)}_\theta(\hat{\mathbf{y}}_{t}))$ where $\hat{\mathbf{y}}_{t} = (\hat{\mathbf{x}}_{t}, \hat{\mathbf{x}}_{t-1})$. Note that for refinement, the autoregressive output can be obtained by any other surrogate model. \cref{fig:phase_space} illustrates the phase-space traversal strategy implemented by \textit{SNS}. Through numerical experiments, we found that denoising first in the conditional variable and then denoising the current frame had the best numerical performance when compared to many other candidate traversal paths. (\cref{traverse}) We do not claim this path is optimal; rather, we suspect this advantage reflects a numerical bias in estimating the \textit{multi-noise-level score}, and that all monotonically decreasing paths should be equivalent. We provide a proof that shows the equivalence of monotonically decreasing paths in distribution in \cref{tra_proof}, and we leave a more detailed investigation for future work.

\section{Numerical Considerations}
We present the pseudocode for using \textit{SNS} as a complementary model to provide long-term performance guarantee for fast surrogates targeting minimal \textit{conditional approximation error} (\cref{alg:refine}). We follow DDPM-style training, discretizing the forward process according to the noise schedule $\{\alpha^s,\beta^s\}_{s\in \mathbb{N}([0,S])}$ of \cite{Ho2020}. The training process and additional numerical details of \textit{SNS} are provided in Appendix~\ref{Training}. 

\textit{SNS} represents our first attempt to identify an optimal tradeoff between \textit{Conditional approximation error} and \textit{OOD error} within the proposed framework, and in its current form it still faces several numerical challenges. First, although access to the multi-noise-level score allows the use of many existing diffusion-based methods, it also leads to a more difficult optimization problem. In particular, the \textit{SNS} training objective departs from the conventional formulation: traditional score estimators condition on both the noisy state and the noise level, whereas SNS operates solely on the noisy states. As a result, \textit{SNS} requires a finer discretization of diffusion time compared to other autoregressive conditional diffusion models during the reverse process to generate high-fidelity next-frame predictions. This substantially increases the already high computational cost of simulating the full reverse diffusion process. Consequently, all numerical results in this work rely on a refinement strategy applied to a pointwise-estimate surrogate model (e.g., a neural net that directly outputs in one pass). While this cost may be reduced in future work—potentially enabling diffusion-based models to scale to long time-horizon generation—we restrict our attention here to refinement tasks.
Defining an appropriate evaluation metric for long-time simulations remains an open problem. We measure the \textit{conditional approximation error} with temporal consistency metrics such as spatio-temporal autocorrelation and rate of change in state space as described under \cref{fig:temporal_metric}. In terms of \textit{OOD} error, while density-based metrics such as the Kullback–Leibler divergence and Wasserstein distance are natural candidates, estimating KL divergence in high-dimensional settings is computationally prohibitive. Consequently, we restrict our evaluation to qualitative diagnostics until a principled theoretical framework for this class of tasks becomes available. In particular, we will assess the long generation consistency through the spectrum of the energy in the state space provided at the bottom of all plots. 

\subsection{Kolmogorov flow}
 Simulating flows governed by the Navier--Stokes equations has long been a central challenge for neural surrogate models. We first consider Kolmogorov flow, a system on which \textit{Thermalizer} has previously demonstrated strong performance. We implement \cref{alg:refine} using the same pointwise surrogate model employed by \textit{Thermalizer}, a Unet trained with a multi-step MSE objective. (\cref{Training}) Such models usually achieve high fidelity short-time predictions but diverge from the stationary distribution in the long run.  As shown in \cref{fig:kol_traj}, \textit{SNS} is able to maintain its diverging trajectories within the training distribution over long-time horizons.

While \textit{ACDM} remains numerically stable, it produces fields with biased large-scale structures, leading to overly smooth solutions. It is important to note that both \textit{Thermalizer} and \textit{SNS} were trained with a noising schedule of 1000 diffusion steps, whereas $\textit{ACDM}$ was evaluated with only 200 steps, making a direct performance comparison inherently unfair. Nevertheless, the computational cost of using \textit{ACDM} for generation renders it impractical for long-time simulations. In contrast, the computational cost of \textit{SNS} and \textit{Thermalizer} is approximately two orders of magnitude lower than that of \textit{ACDM}.

\cref{fig:temporal_metric} demonstrate the superior performance of \textit{SNS} compared to \textit{thermalizer} in temporal consistency. This is expected by the theory since \textit{thermalizer} only aims to preserve marginal distributions during its projections, while \textit{SNS} also accounts for the transition density. Although \textit{Thermalizer} and \textit{SNS} exhibit comparable performance in long-time simulations of Kolmogorov flow in terms of \textit{OOD} error, we emphasize that \textit{SNS} requires no hyperparameter tuning, whereas \textit{Thermalizer} relies on tuning two diffusion hyperparameters, $s_{\text{init}}$ and $s_{\text{stop}}$, which define the start and end points of its diffusion process.

\subsection{Quasigeostrophic Turbulence}
We also consider two-layer quasigeostrophic (QG) flows as a test case. This dynamical system is of central importance in oceanic and atmospheric sciences, where it serves as a reduced-order model for a wide range of geophysical phenomena \cite{QiDi}. We evaluate all models on the jet configuration, following the experimental setup of \cite{ross2023}.

All other neural surrogates we considered failed when trained using their respective published procedures. In particular, \textit{ACDM} is unable to generate coherent samples even in a static denoising setting with 200 denoising steps. We did not explore \textit{ACDM} configurations with more than 200 diffusion steps due to the computational impracticality. The publicly available implementation of \textit{Thermalizer} fails to produce rollouts due to an internal safeguard that terminates execution when the inferred noise level exceeds a prescribed threshold.

We acknowledge that further fine-tuning of \textit{ACDM} and \textit{Thermalizer} can potentially result in models capable of simulating high-fidelity trajectories over long horizons. It is also worth noting that the same architecture used for the Kolmogorov flow did not succeed with the two-layer QG system at first, and a more expressive model architecture for the \textit{multi-noise-level score} estimator was needed. We provide more numerical details and varients of \textit{SNS} with ``easier'' optimization objective in  \cref{APPB}.   

Additional videos for visualization is provided \href{https://drive.google.com/drive/folders/1eUvQku5rJt_njWyM4YLRVInzGifT-qCj?usp=drive_link}{here}. And the code used for the numerical experiments in this paper is publicly available: \href{https://github.com/QiLiu6/SNS}{here}.

\section{Conclusion}
We introduced \textit{multi-noise-level denoising oracle}, a principled way to explicitly balance \textit{conditional approximation error} and \textit{OOD error}, formalizing heuristic-based methods for neural simulation over a long time horizon. Within this framework, we identified a central question: how to achieve an optimal tradeoff between \textit{conditional approximation error} and \textit{OOD error} through controlled perturbations of the conditional variable. To address this question, we proposed the \textit{Self-refining Neural Surrogate} model (\textit{SNS}), a conditional diffusion model which adaptively infers the optimal noise level of the conditional variable to preserve stability while respecting temporal dynamics.

Despite its conceptual succinctness, the current implementation of \textit{SNS} is restricted by a more challenging training objective. Regardless, we demonstrated competitive performance with models designed specifically for stability guarantees in long-time simulation of the Kolmogorov flow. We also demonstrated success of \textit{SNS} in modeling more ``complicated'' dynamics such as two-layer QG flows. We view \textit{SNS} as an initial step toward a broader class of methods enabled by the multi-noise-level denoising oracle, aimed at reconciling accurate short-term prediction with long-term distributional consistency. An important direction for future work is to better understand the interplay between the temporal correlations inherent in the underlying dynamical system and the model-induced distribution drift, which we believe is critical for achieving truly long-horizon neural simulations.

\section{Acknowledgment}
The authors would like to thank Chris Pedersen, Sara Vargas, Matthieu Blanke, Fabrizio Falasca, Pavel Perezhogin, Jiarong Wu, Oliver B\"uhler, Rudy Morel, Jonathan Weare, Sara Shamekh, and Ryan D\`u for many valuable discussions. This project is supported by Schmidt Sciences, LLC.

\section{Impact Statement}
This paper aims to help advance the field of Machine Learning. There are many potential societal consequences of our work, none of which we feel must be specifically highlighted here.
 

\bibliography{example_paper}
\bibliographystyle{icml2026}

\newpage

\appendix
\onecolumn
\section{Dynamical Systems}
We provide here an overview of the dynamical systems on which we validate our method. For a more thorough description, please refer to \cite{pedersen2025,ross2023}.
\subsection{Kolmogorov Flow}
The governing PDEs for the system are the 2D incompressible Navier-Stokes equations with sinusoidal forcing: 
\begin{equation*}
\begin{aligned}
    \partial_t \mathbf{u}+ (\mathbf{u}\cdot \nabla)\mathbf{u}&=\nu\nabla^2\mathbf{u}-\frac{1}{\rho}\nabla p+\mathbf{f} \\
    \nabla \cdot \mathbf{u} &= 0
\end{aligned}
\end{equation*}
where $\mathbf{u} = (u,v)$ is the two dimensional velocity field, $\nu$ is the kinematic viscosity, $\rho$ is the fluid density, $p$ is the pressure, and $\mathbf{f}$ is an external forcing term. We choose the constant sinusoidal forcing $\mathbf{f} = (\sin(4y), 0)$ as done in \cite{Kochkov2021}. Following \cite{pedersen2025}, we set $p=1, \nu = 0.001$ which corresponds to a Reynolds number $\mathrm{Re} = 10,000$. Defining the vorticity as the 2D curl of the velocity field, i.e. $\omega := \nabla_{H} \times \mathbf{u}$. One can obtain the vorticity equation by taking the curl of the above equations:
\begin{equation*}
\partial_t \omega + \mathbf{u}\cdot\nabla \omega
= \nu \nabla^2 \omega + (\nabla \times \mathbf{f})\cdot \hat{\mathbf{z}} .
\end{equation*}
The equations are solved using the pseudo-spectral method with periodic boundary conditions from the open-source code jax-cfd \cite{dresdner2023}. All results presented in this paper are based on the vorticity fields. 

\subsection{Two Layer Quasigeostrophic Equations}
For a two-layer fluid, defining the potential vorticity to be:
\begin{equation}
q_i=\nabla^2\psi_i+(-1)^i\frac{f_0^2}{g'H_m}(\psi_1-\psi_2),m\in\{1,2\},
\end{equation}

where $i=1$ and $i=2$ denote the upper layer and the lower layer, respectively. $H_i$ is the average depth of the layer, $\psi$ is the streamfunction, and $\mathbf{u}_i = \nabla^\perp \psi_i$. $f_0$ is the Coriolis frequency. The time evolution of the system is given by
\begin{equation}
\partial_tq_i+J(\psi_i, q_i)+\beta_i\partial_x\psi_i+U_i\partial_xq_i=\\
-\delta_{i,2}r\nabla^2\psi_i+\mathrm{ssd}
\label{eq:q_dt}
\end{equation}
where $U_i$ is the mean flow in $x$, $\beta_i$ accounts for the beta effect, $r$ is the bottom drag coefficient, and $\delta_{i,2}$ is the Kronecker delta. We set the parameters following the jet configuration in \cite{ross2023} and solve the equations with pyqg with numerical timestep $\delta t = 3600.0$.\cite{ryanabernathey} The physical time step in \cref{fig:QG} are subsampled by a factor of ten so the physical time step is $\Delta t = 36000.0$.
\begin{figure*}
    \centering
    \includegraphics[width=0.4\linewidth]{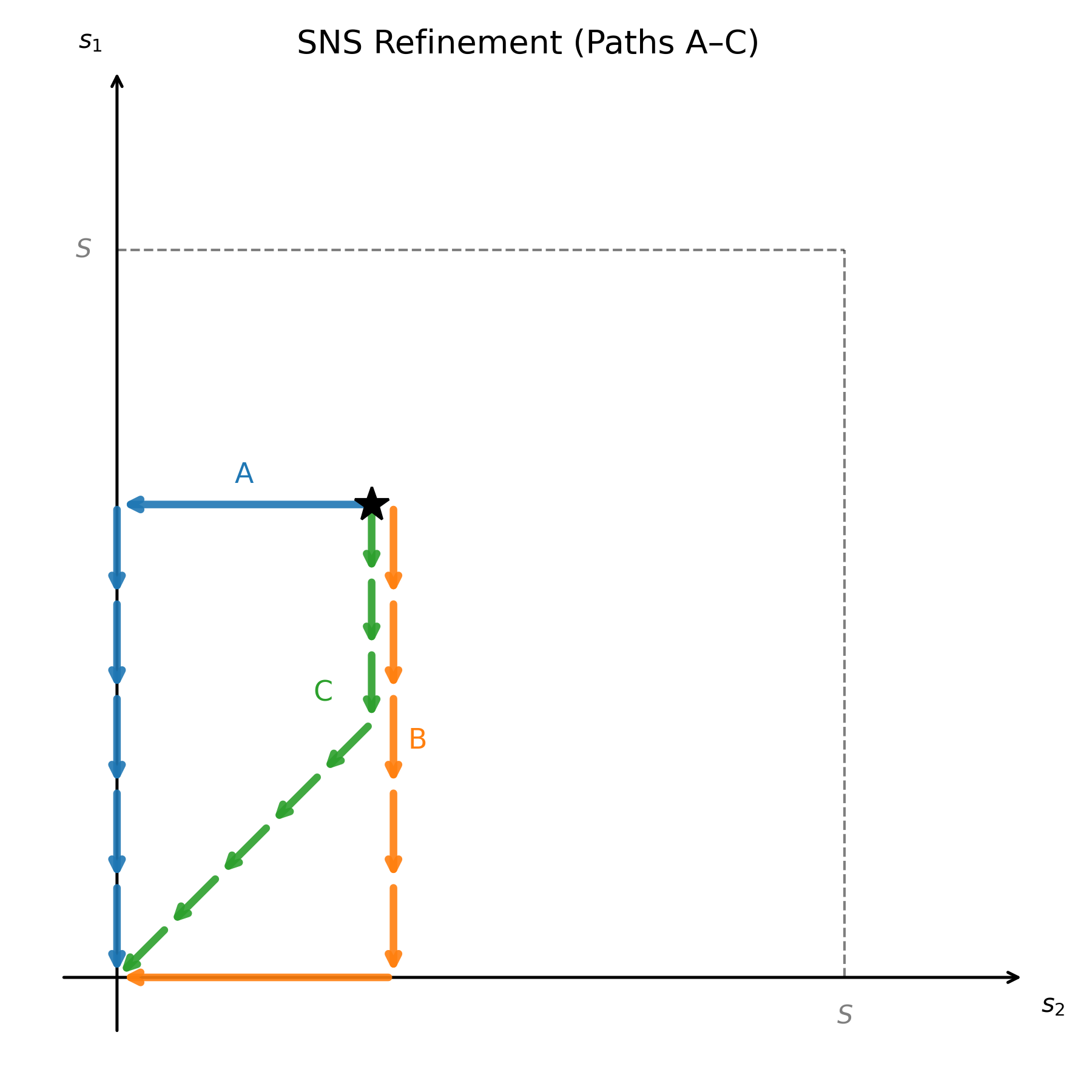}
    \caption{Different reasonable traversal strategies in the reverse SDE phase space}
    \label{fig:traverse}
\end{figure*}
\section{Numerical details}
\label{APPB}
\subsection{Model Architecture}
All models are based on a U-Net style encoder--decoder architecture \cite{ronneberger2015} implemented as a residual U-Net with multi-scale skip connections. For the \textit{multi-noise-level score} estimator, two auxiliary regression heads are attached to the bottleneck of the Unet as done by \cite{pedersen2025}. 

The network first projects the inputs to a 64-channel feature map using a $3\times3$ convolution with circular padding, followed by GELU activations throughout the network. At the bottleneck, the model applies two residual blocks operating on 512-channel feature maps. The decoder mirrors the encoder structure, using transposed convolutions ($4\times4$, stride 2) for upsampling and concatenating the corresponding encoder features via skip connections. Each decoder stage applies a residual block to fuse the upsampled and skip-connected features, progressively reducing the number of channels back to 64. In addition, the model used for the \textit{multi-noise-level score} estimator of the QG flows has attention heads attached to the bottleneck layer. The model did not work well without this addition.

\subsection{Training}
\label{Training}
The training of the Gaussian approximation, $p_\theta$, of the transition density with point-wise estimate mean and fixed variance is done via minimizing the multi-step mean squared error on the residuals of the snapshots, i.e.
\begin{equation*}
    \mathcal{L}(\theta) =\frac{1}{N}\sum_{k=1}^N\sum_{t=0}^{L-1}\parallel\Psi_\theta(\mathbf{x}_t^k)-(\mathbf{x}_{t+1}^k-\mathbf{x}_{t}^k)+\sigma \mathbf{z}\parallel^2.
\end{equation*}

where $\mathbf{z}\in \mathcal{N}(0,I)$, $p_\theta = \mathcal{N}(\Psi_\theta,\sigma^2I)$. We follow the training configuration proposed by \cite{pedersen2025} to obtain a similar baseline model. 

We train \textit{SNS} using the \textit{DDPM} style objective with 440,000 training pair $\mathbf{y}_t = (\mathbf{x}_{t},\mathbf{x}_{t-1})$.
We choose $\{ \beta^s \}_{s=1}^S$ to be the cosine variance schedule as proposed in \cite{Ho2020} with $\alpha^s = 1 - \beta^s$ and
$\bar{\alpha}^s = \prod_{k=1}^s \alpha^k$.
The forward diffusion process has the transition density:
\begin{equation*}
q(x^s \mid x^0) = \mathcal{N}\!\left(
 \sqrt{\bar{\alpha}^s}\, x^0,\; (1 - \bar{\alpha}^s) I
\right),
\end{equation*}
which allows sampling $x^s_t$ in closed form as
\begin{equation*}
x_t^{s_1} = \sqrt{\bar{\alpha}^{s_1}}\, x_t^0 + \sqrt{1 - \bar{\alpha}^{s_1}}\, \epsilon,
\quad x_{t-1}^{s_2} = \sqrt{\bar{\alpha}^{s_2}}\, x_{t-1}^0 + \sqrt{1 - \bar{\alpha}^{s_2}}\, \epsilon',
\qquad \epsilon, \epsilon' \sim \mathcal{N}(0, I).
\end{equation*}
We noise the inputs to the neural net independently with $s_1, s_2 \sim \mathcal{U}(1,S)$, and the loss function we minimize is:
\begin{equation*}
   \mathcal{L}(\theta) := \mathbb{E}\{\| (\epsilon,\epsilon') - \mathbf{\Phi}^{(3)}_\theta(\mathbf{y}^{\mathbf{s}}_t) \|^2 +\lambda\sum_{s_1,s_2}[ \mathbf{1}_{s_1}\log \mathbf{\Phi}^{(1)}_\theta (\mathbf{y}^{\mathbf{s}}_t) + \mathbf{1}_{s_2}\log \mathbf{\Phi}^{(2)}_\theta (\mathbf{y}^{\mathbf{s}}_t)]
\end{equation*}
where the terms were defined before \cref{loss}.
The minimizer gives us access to the \textit{multi-noise-level score} with:
\begin{equation*}
\epsilon
=
-\sqrt{1-\bar{\alpha}_t}\;
\nabla_{x_t} \log q(x^s \mid x^0).
\end{equation*}

We use AdamW with a learning rate of $5e-4$ and set the weighting parameter $\lambda = 0.1$. Each training run requires approximately two days on a single NVIDIA H200 GPU, using a U-Net with 73,727,522 trainable parameters. Empirically, we find that augmenting the input with a sample from the invariant measure improves convergence, even though the trained model ultimately exhibits negligible dependence on this signal, indicating that its benefit is primarily during the early stages of training.

\subsection{\textit{SNS} Traversal Strategies}
\label{traverse}
Mathematically, all continuous transversal strategies that end at the point $(0,0)$ following a monotonically decreasing path satisfy the reverse process SDE. Thus, it appears to be that there should be no difference between different choices of traversal. \cref{fig:traverse} shows some reasonable traversal strategies in the reverse process phase space. Through numerical experiments, the most robust strategy appears to be path A in \cref{fig:traverse}. Despite numerical impracticality, we still present \cref{alg:gen} for using $SNS$ to directly generate next state prediction. 

We also provide a histogram of the denoising steps associated with the two numerical experiments presented in the main text in \cref{fig:histogram}. It is worth noting that, in practice, one could choose an activation threshold for the algorithms presented, i.e. \textit{SNS} only starts denoising when a certain noise level is reached. This can improve the inference time of \textit{SNS} while preserving most of its performance due to the extremely high signal-to-noise ratio at small noise levels.
\begin{figure*}
    \centering
    \includegraphics[width=\linewidth]{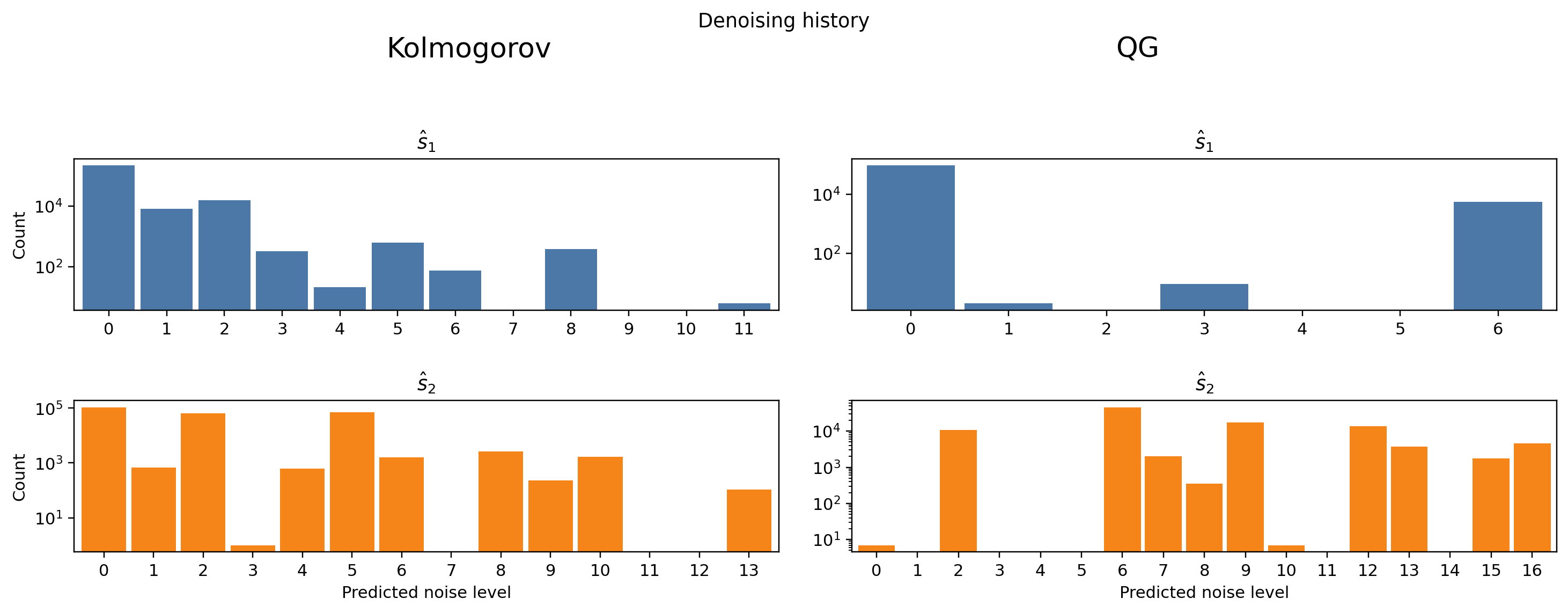}
    \caption{Histogram of denoising history for the numerical experiments in the main text}
    \label{fig:histogram}
\end{figure*}

\begin{algorithm}[tb]
  \caption{\textit{SNS} for generation}
  \label{alg:gen}
  \begin{algorithmic}
    \REQUIRE Initial state $\mathbf{x}_0$, trained \textit{SNS} $\mathbf{\Phi}_{\theta}$
    \FOR{$t=1$ {\bfseries to} $T$}
    \STATE $\hat{\mathbf{x}}_t, \mathbf{z} \sim\mathcal{N}(\mathbf{0},\mathbf{I})$
    \STATE $\hat{s}_2 = \mathbf{\Phi}^{(2)}(\hat{\mathbf{x}}_t,\hat{\mathbf{x}}_{t-1})$
    \STATE $\mathbf{x}_{t-1}^{\hat{s}_2} = \sqrt{\bar{\alpha}_{\hat{s}_s}}\mathbf{x}_{t-1} + \sqrt{1-\bar{\alpha}_{\hat{s}_s}}\mathbf{z}$
    \FOR{$s=S$ {\bfseries to} $0$}
    \STATE $\mathbf{z}^s\sim\mathcal{N}(\mathbf{0},\mathbf{I})$
    \STATE $\hat{\mathbf{x}}_t^s = \frac{1}{\sqrt{\alpha_s}}\left(\hat{\mathbf{x}}_t^s - \frac{1-\alpha_s}{\sqrt{1-\bar{\alpha}}_s} \mathbf{\Phi}^{(3)}_\theta(\hat{\mathbf{x}}_t^s,\hat{\mathbf{x}}_{t-1}^{s_2}) \right) + \sqrt{\beta_s} \mathbf{z}^s$
    \ENDFOR
    \ENDFOR
    \end{algorithmic}
\end{algorithm}

\subsection{\textit{SNS Variants}}
\label{Var}

We consider conditioning the \textit{multi-noise-level denoising oracle}
$\mathbf{D}(\mathbf{x}_t^{s_1}, \mathbf{x}_{t-1}^{s_2})$ on a snapshot from the past,
$\mathbf{x}_{t-r}$. When $r \gg 1$, the conditional denoising oracle recovers the
unconditional oracle due to the chaotic nature of the underlying system. Similarly,
in case when $r = 1$, the conditional denoising oracle also reduces to
the unconditional case as a consequence of the Markovian property of the dynamics.

The regime of interest is when $\mathbf{x}_{t-1}$ is perturbed by noise and $r$ is
chosen such that $\mathbf{x}_{t-r}$ remains partially coupled to $\mathbf{x}_{t-1}$,
but is not trivially redundant. In this setting, the parameter $r$ can be interpreted
as controlling the error trade-off. Analogous to the SNS procedure for
estimating the noise level of $\mathbf{x}_{t-1}^{s_2}$, one could in principle estimate how far back in the trajectory one must condition in order to sufficiently weaken temporal dependence while maintaining numerical stability.

This alternative approach is considerably more challenging to implement, as it
requires access to historical trajectory data. In practice, storage of long trajectories as training data may already be computationally demanding. Nevertheless,
we empirically observe that providing a noise-free past frame at a predetermined lag, $r$, as an auxiliary conditioning signal substantially simplifies the optimization problem.

The key challenge lies in selecting a conditioning frame that is neither too close
to the present (leading to excessive dependence) nor too distant (resulting in
insufficiently informative conditioning). We keep this discussion informal, as the
appropriate choice of $r$ is heuristic and system-dependent. In our experiments,
conditioning on a clean frame 100 steps in the past consistently improved optimization performance and was effective for stably simulating the two layer quasigeostrophic system using the same architecture backbone used for the Kolmogorov flow. (\cref{fig:QG})

\section{Proofs}
\label{AppC}

\begin{theorem}[Equivalence between monotonic traversal strategies]
\label{tra_proof}

Starting from an initial point 
$(\mathbf{x}_t^{s_1^0},\mathbf{x}_{t-1}^{s_2^0})$, consider a discretized
traversal path in reverse process phase space : $\{(s_1^k,s_2^k)\}_{k=0}^{N}$,  with $(s_1^0,s_2^0)$ the starting point and $(s_1^N,s_2^N)=(0,0)$.
Assume the path is monotonically decreasing in each coordinate: $s_i^{k+1} \le s_i^{k}, ~ i=1,2.$
Let $(\mathbf{x}_t^{s_1^k},\mathbf{x}_{t-1}^{s_2^k})$ evolve according to the
coupled reverse SDE defined by the multi-noise-level score.
Then the resulting joint distribution of the initial and terminal state
\[
p\big(
\mathbf{x}_t^{s_1^N},\mathbf{x}_{t-1}^{s_2^N},
\mathbf{x}_t^{s_1^0},\mathbf{x}_{t-1}^{s_2^0}
\big)
\]
is independent of the particular traversal path in reverse process phase space, i.e for any two monotonically decreasing paths
\[
\{(s_1^k,s_2^k)\}_{k=0}^{N},
\qquad
\{(s_1^{\prime k},s_2^{\prime k})\}_{k=0}^{N'},
\]
connecting $(s_1^0,s_2^0)$ to $(s_1^N,s_2^N)$, the random variables generated by the reverse coupled SDE satisfy
\[
p\!\left(
\mathbf{x}_t^{s_1^N},\mathbf{x}_{t-1}^{s_2^N} , 
\mathbf{x}_t^{s_1^0},\mathbf{x}_{t-1}^{s_2^0}
\right)
=
p\!\left(
\mathbf{x}_t^{s_1^{'N}},\mathbf{x}_{t-1}^{s_2^{'N}},
\mathbf{x}_t^{s_1^0},\mathbf{x}_{t-1}^{s_2^0}
\right),
\]

\begin{proof}
Fix a monotonically decreasing discrete path in reverse process phase space
\[
\{(s_1^k,s_2^k)\}_{k=0}^{N}, 
\qquad (s_1^0,s_2^0)\to (s_1^N,s_2^N)=(0,0),
\qquad s_i^{k+1}\le s_i^k.
\]
Since the path is monotonically decreasing, we can parametrize this path with $u\in[0,1]$ and define
\[
u \mapsto \mathbf{s}(u) \coloneqq (s_1(u),s_2(u)),
\qquad s_i(\cdot)\ \text{nonincreasing},\quad 
\mathbf{s}(0)=(s_1^0,s_2^0),\ \mathbf{s}(1)=(0,0),
\]
such that the discretization points are contained in the image
$\{\mathbf{s}(u)\,:\,u\in[0,1]\}$ (e.g.\ take $s_i(u)$ piecewise linear through
$\{s_i^k\}$). Define the coupled state along the path by the single
path-parameterized random variable
\[
\mathbf{Y}_u \;\coloneqq\; 
\big(\mathbf{x}_t^{s_1(u)},\mathbf{x}_{t-1}^{s_2(u)}\big).
\]
then the increments
$ds_1$ and $ds_2$ along the path can be written as
\[
ds_1 = \dot s_1(u)\,du,\qquad ds_2=\dot s_2(u)\,du,
\qquad \dot s_i(u)\le 0\
\]
Substituting these relations into the coupled reverse SDE yields a single
SDE in the parameter $u$ for $\mathbf{Y}_u$ (interpreting the stochastic
integrals componentwise with respect to the time-changed Brownian motions):
\begin{equation*}
\label{eq:param-reverse}
\begin{aligned}
d\mathbf{x}_t^{s_1(u)}
=&\Big[\mathbf{f}(\mathbf{x}_t^{s_1(u)},s_1(u))
- g(s_1(u))^2\, \nabla_{\mathbf{x}_t}\log p^{\mathbf{s}(u)}(\mathbf{y}_t^{\mathbf{s}(u)})\Big]\,
\dot s_1(u)\,du
\\
&\quad + g(s_1(u))\, d\bar{\mathbf{w}}^{s_1(u)},
\\
d\mathbf{x}_{t-1}^{s_2(u)}
=&\Big[\mathbf{f}(\mathbf{x}_{t-1}^{s_2(u)},s_2(u))
- g(s_2(u))^2\, \nabla_{\mathbf{x}_{t-1}}\log p^{\mathbf{s}(u)}(\mathbf{y}_t^{\mathbf{s}(u)})\Big]\,
\dot s_2(u)\,du
\\
&\quad + g(s_2(u))\, d\bar{\mathbf{w}}^{s_2(u)}.
\end{aligned}
\end{equation*}
Thus, following a monotone path reduces the evolution to the single random
variable $\mathbf{Y}_u$ driven by the parameter $u$.

\medskip

Now consider the forward process applied independently to each
component,
\[
d\mathbf{x}^s = \mathbf{f}(\mathbf{x}^s,s)\,ds + g(s)\,d\mathbf{w}^s,
\qquad s\in[0,S],
\]
and run it along the same monotone parametrization $u\mapsto(s_1(u),s_2(u))$,
i.e.\ consider the forward path process
\[
\mathbf{Y}_u^{\mathrm{fwd}}
\;\coloneqq\;
\big(\mathbf{x}_t^{s_1(u)},\mathbf{x}_{t-1}^{s_2(u)}\big)
\quad\text{with}\quad
d\mathbf{x}_t^{s_1(u)} = \mathbf{f}(\mathbf{x}_t^{s_1(u)},s_1(u))\,ds_1(u) + g(s_1(u))\,d\mathbf{w}^{s_1(u)},
\]
and analogously for $\mathbf{x}_{t-1}^{s_2(u)}$.
By construction, the reverse coupled SDE uses the multi-noise-level score
$\nabla \log p^{\mathbf{s}}(\mathbf{y}_t^{\mathbf{s}})$, which is precisely
the object that defines the time-reversal of the forward marginals at each
noise-level $\mathbf{s}=(s_1,s_2)$. Consequently, when the reverse SDE is initialized at $\mathbf{Y}_1=(\mathbf{x}_t^{s_1^N},\mathbf{x}_{t-1}^{s_2^N})$, its induced
law on path space $\{\mathbf{Y}_u: u\in[0,1]\}$ matches (in reverse time) the
law of the forward process runs along the same path. In particular, the
reverse process produces the correct conditional endpoint distribution
associated with that path:
\begin{equation*}
\label{eq:endpoint-match}
p\!\left(\mathbf{Y}_0^{bwd}, \mathbf{Y}_1^{bwd}\right)
=
p\!\left(\mathbf{Y}_0^{fwd}, \mathbf{Y}_1^{fwd}\right),
\end{equation*}

\medskip

Now take any two monotone paths $\mathbf{s}(u)=(s_1(u),s_2(u))$ and
$\mathbf{s}'(u)=(s_1'(u),s_2'(u))$ connecting the same endpoints
$(s_1^0,s_2^0)$ and $(0,0)$. Under the forward dynamics, the two coordinates
evolve independently given their own noise levels, i.e.\ the law of
$\mathbf{x}_t^{s_1}$ depends only on $s_1$ and the law of $\mathbf{x}_{t-1}^{s_2}$
depends only on $s_2$, with independent driving Brownian motions. Therefore,
for any such path, the forward endpoint distribution at $(0,0)$ factors as
\[
p\!\left(\mathbf{x}_t^{s_1^0},\mathbf{x}_{t-1}^{s_2^0},
\mathbf{x}_t^{0},\mathbf{x}_{t-1}^{0}\right)
=
\left(p\!\left(\mathbf{x}_t^{s_1^0}\mid\mathbf{x}_t^{0}\right)
,p\!\left(\mathbf{x}_{t-1}^{s_2^0}\mid\mathbf{x}_{t-1}^{0}\right)\right) \otimes p(\mathbf{x}_t^{0},\mathbf{x}_{t-1}^{0}),
\]
where $\otimes$ is element-wise multiplication. The factoring clearly does not depend on how one jointly traverses $(s_1,s_2)$, only on the endpoints in each coordinate. 

Thus, we conclude that the reverse process generates the right distribution independent of the traversal path.
\end{proof}
\end{theorem}

\subsection*{Equivalence of minimizers: joint score vs.\ conditional scores}
Fix $\mathbf{s}=(s_1,s_2)\in[0,S]^2$ and let
$\mathbf{y}_t^{\mathbf{s}} := (\mathbf{x}_t^{s_1},\mathbf{x}_{t-1}^{s_2}) \sim p^{\mathbf{s}}$
denote the noised pair at noise levels $\mathbf{s}$.
Consider the denoising score matching objective
\begin{equation}
\label{eq:c1_joint_obj}
\mathcal{L}_{\mathrm{joint}}(\theta)
:=
\mathbb{E}_{\mathbf{s}\sim\mathcal{U}(0,S)^2}\,
\mathbb{E}_{\mathbf{y}_t^0\sim p}\,
\mathbb{E}_{\mathbf{y}_t^{\mathbf{s}}\sim p^{\mathbf{s}}(\cdot\mid \mathbf{y}_t^0)}
\Big\|
\nabla_{\mathbf{y}} \log q_{\mathbf{s}}(\mathbf{y}\mid \mathbf{y}_t^0)
-
\mathbf{\Phi}_\theta(\mathbf{y}_t^{\mathbf{s}},s_1,s_2)
\Big\|_2^2,
\end{equation}
where $\mathbf{y}=\mathbf{y}_t^{\mathbf{s}}$ and $p^{\mathbf{s}}(\cdot \mid \cdot)$ is the forward noising kernel.

Also consider the objective
\begin{equation}
\label{eq:c1_cond_obj}
\mathcal{L}_{\mathrm{cond}}(\theta)
:=
\mathbb{E}_{\mathbf{s}\sim\mathcal{U}(0,S)^2}\,
\mathbb{E}_{\mathbf{y}_t^{\mathbf{s}}\sim p^{\mathbf{s}}}
\Big\|
\begin{pmatrix}
\nabla_{\mathbf{x}_t} \log p^{s_1}\!\left(\mathbf{x}_t^{s_1}\mid \mathbf{x}_{t-1}^{s_2}\right)\\[2pt]
\nabla_{\mathbf{x}_{t-1}} \log p^{s_2}\!\left(\mathbf{x}_{t-1}^{s_2}\mid \mathbf{x}_t^{s_1}\right)
\end{pmatrix}
-
\mathbf{\Phi}_\theta(\mathbf{y}_t^{\mathbf{s}},s_1,s_2)
\Big\|_2^2.
\end{equation}

Then given a parametric class $\Theta$, the sets of minimizers coincide:
\[
\arg\min_{\theta\in\Theta}\,\mathcal{L}_{\mathrm{joint}}(\theta)
\;=\;
\arg\min_{\theta\in\Theta},\mathcal{L}_{\mathrm{cond}}(\theta),
\]

\paragraph{Notation.}
Let $p^{\mathbf{s}}(\mathbf{y}_t^{\mathbf{s}} \mid \mathbf{y}_t^0)$ denote
the forward noising kernel induced by the independent SDEs, and define
the marginal
\[
p^{\mathbf{s}}(\mathbf{y}_t^{\mathbf{s}})
:= \int p(\mathbf{y}_t^0)\,
p^{\mathbf{s}}(\mathbf{y}_t^{\mathbf{s}} \mid \mathbf{y}_t^0)\,
d\mathbf{y}_t^0 .
\]

\begin{lemma}
\label{lem:tower_score}
Fix $\mathbf{s}=(s_1,s_2)$. For any measurable
$\boldsymbol{\psi}(\mathbf{y})$,
the objectives
\begin{align*}
\mathbb{E}_{\mathbf{y}_t^0 \sim p^0}
\mathbb{E}_{\mathbf{y}_t^{\mathbf{s}} \sim p^{\mathbf{s}}(\cdot\mid\mathbf{y}_t^0)}
\Big[
\|
\nabla_{\mathbf{y}} \log p^{\mathbf{s}}(\mathbf{y}^\mathbf{s}\mid\mathbf{y}_t^0)
- \boldsymbol{\psi}(\mathbf{y})
\|^2
\Big]
\end{align*}
and
\begin{align*}
\mathbb{E}_{\mathbf{y} \sim p^{\mathbf{s}}}
\Big[
\|
\nabla_{\mathbf{y}} \log p^{\mathbf{s}}(\mathbf{y})
- \boldsymbol{\psi}(\mathbf{y})
\|^2
\Big]
\end{align*}
have the same minimizer over $\boldsymbol{\psi}$, given by
\[
\boldsymbol{\psi}^*(\mathbf{y})
=
\nabla_{\mathbf{y}} \log p^{\mathbf{s}}(\mathbf{y}^s).
\]
\end{lemma}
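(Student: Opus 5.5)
This is the classical denoising score matching identity (Vincent, 2011), specialized to the product noising kernel $p^{\mathbf{s}}(\mathbf{y}^{\mathbf{s}}\mid\mathbf{y}_t^0)=p^{s_1}(\mathbf{x}_t^{s_1}\mid\mathbf{x}_t^0)\,p^{s_2}(\mathbf{x}_{t-1}^{s_2}\mid\mathbf{x}_{t-1}^0)$. The plan is to show that the two objectives differ by a constant independent of $\boldsymbol{\psi}$. The second objective is then minimized, pointwise in $\mathbf{y}$, by $\boldsymbol{\psi}^*=\nabla_{\mathbf{y}}\log p^{\mathbf{s}}$.

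\textbf{Step 1: expand both squares.} In each objective, expand the squared norm into three terms: a $\boldsymbol{\psi}$-free term, the term $\|\boldsymbol{\psi}(\mathbf{y})\|^2$, and a cross term. The $\|\boldsymbol{\psi}\|^2$ terms agree. The reason is that the iterated expectation over $\mathbf{y}_t^0\sim p^0$ and then $\mathbf{y}\sim p^{\mathbf{s}}(\cdot\mid\mathbf{y}_t^0)$ is, by definition of the marginal $p^{\mathbf{s}}$ in the Notation paragraph, an expectation over $\mathbf{y}\sim p^{\mathbf{s}}$.

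\textbf{Step 2: the cross terms (the key identity).} We must show
\[
\mathbb{E}_{\mathbf{y}_t^0}\mathbb{E}_{\mathbf{y}\mid\mathbf{y}_t^0}\big\langle\nabla_{\mathbf{y}}\log p^{\mathbf{s}}(\mathbf{y}\mid\mathbf{y}_t^0),\boldsymbol{\psi}(\mathbf{y})\big\rangle=\mathbb{E}_{\mathbf{y}\sim p^{\mathbf{s}}}\big\langle\nabla_{\mathbf{y}}\log p^{\mathbf{s}}(\mathbf{y}),\boldsymbol{\psi}(\mathbf{y})\big\rangle .
\]
To do this, write the left side as
\[
\int\!\!\int p(\mathbf{y}_t^0)\,\big\langle\nabla_{\mathbf{y}} p^{\mathbf{s}}(\mathbf{y}\mid\mathbf{y}_t^0),\boldsymbol{\psi}(\mathbf{y})\big\rangle\,d\mathbf{y}\,d\mathbf{y}_t^0 ,
\]
then exchange the order of integration and differentiation to obtain $\int\langle\nabla_{\mathbf{y}}p^{\mathbf{s}}(\mathbf{y}),\boldsymbol{\psi}(\mathbf{y})\rangle\,d\mathbf{y}$, which is exactly the right side. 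Equivalently, the marginal score is the posterior mean of the conditional score, $\nabla\log p^{\mathbf{s}}(\mathbf{y})=\mathbb{E}[\nabla\log p^{\mathbf{s}}(\mathbf{y}\mid\mathbf{y}_t^0)\mid\mathbf{y}^{\mathbf{s}}=\mathbf{y}]$. This is the same computation as the Tweedie relation already invoked for the multi-noise-level score.

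\textbf{Step 3: conclude.} The objectives now differ only by the $\boldsymbol{\psi}$-independent constant
\[
\mathbb{E}\|\nabla\log p^{\mathbf{s}}(\mathbf{y}\mid\mathbf{y}_t^0)\|^2-\mathbb{E}\|\nabla\log p^{\mathbf{s}}(\mathbf{y})\|^2 ,
\]
so they have the same minimizers. The second objective is a weighted $L^2(p^{\mathbf{s}})$ distance, so it is minimized exactly when $\boldsymbol{\psi}=\nabla_{\mathbf{y}}\log p^{\mathbf{s}}$, $p^{\mathbf{s}}$-almost everywhere. Alternatively, minimize the first objective directly: it is a least-squares problem in $\boldsymbol{\psi}(\mathbf{y})$, so its minimizer is the conditional expectation from Step 2.

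\textbf{Main obstacle.} The only delicate point is justifying the interchange of $\nabla_{\mathbf{y}}$ and $\int d\mathbf{y}_t^0$, and the finiteness of the constants. This needs $\boldsymbol{\psi}\in L^2(p^{\mathbf{s}})$, so the statement should be read as restricted to square-integrable $\boldsymbol{\psi}$ rather than arbitrary measurable $\boldsymbol{\psi}$. It also needs $\mathbb{E}\|\nabla\log p^{\mathbf{s}}(\mathbf{y}\mid\mathbf{y}_t^0)\|^2<\infty$. For $s_1,s_2>0$ the kernel is Gaussian, as in the DDPM forward process of Appendix~\ref{Training}, and its gradient has Gaussian tails, so dominated convergence applies provided $\mathbf{y}_t^0$ has a finite second moment. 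At $s_i=0$ the kernel degenerates, and I would simply exclude that boundary or treat it by a limiting argument.
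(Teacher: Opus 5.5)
Your proof is correct. Its main route packages the key identity differently from the paper's proof. The paper uses the tower property to rewrite the first objective as $\mathbb{E}_{\mathbf{y}\sim p^{\mathbf{s}}}\mathbb{E}_{\mathbf{y}_t^0\mid\mathbf{y}}[\,\cdot\,]$ and minimizes pointwise in $\mathbf{y}$ to get the posterior mean of the conditional score. It then identifies that posterior mean with $\nabla_{\mathbf{y}}\log p^{\mathbf{s}}(\mathbf{y})$ by differentiating $p^{\mathbf{s}}(\mathbf{y})=\int p(\mathbf{y}_t^0)\,p^{\mathbf{s}}(\mathbf{y}\mid\mathbf{y}_t^0)\,d\mathbf{y}_t^0$ under the integral and applying Bayes' rule. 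This is exactly the ``alternative'' you mention at the end of Step 3.

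Your Vincent-style expansion uses the same interchange of $\nabla_{\mathbf{y}}$ and $\int d\mathbf{y}_t^0$, but proves something stronger: the two objectives differ by a finite, $\boldsymbol{\psi}$-independent constant. That is what justifies the paper's later claim that the argmin sets coincide over a restricted parametric class $\Theta$. The paper's pointwise argument suffices for the lemma as stated, but it only identifies the unconstrained minimizer. To recover the constant offset it would need the bias--variance split $\mathbb{E}\|\mathbf{a}-\boldsymbol{\psi}(\mathbf{y})\|^2=\mathbb{E}\|\mathbb{E}[\mathbf{a}\mid\mathbf{y}]-\boldsymbol{\psi}(\mathbf{y})\|^2+\mathbb{E}\,\mathrm{tr}\,\mathrm{Cov}(\mathbf{a}\mid\mathbf{y})$. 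In return, the paper's route shows the minimizer directly as a posterior mean, which fits the denoising-oracle viewpoint of the main text.

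Your regularity caveats are appropriate and more careful than the paper's. One refinement: for the DDPM kernel with $s_1,s_2>0$ you do not need a moment assumption on $\mathbf{y}_t^0$.
\begin{itemize}
\item The conditional score equals $-(\epsilon/\sqrt{1-\bar{\alpha}^{s_1}},\,\epsilon'/\sqrt{1-\bar{\alpha}^{s_2}})$, so its second moment is finite for any data distribution.
\item $\|\nabla_{\mathbf{y}}p^{\mathbf{s}}(\mathbf{y}\mid\mathbf{y}_t^0)\|$ is bounded uniformly in $\mathbf{y}$ and $\mathbf{y}_t^0$, so dominated convergence against $p(\mathbf{y}_t^0)$ alone justifies the interchange.
\end{itemize}
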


\begin{proof}
Fix $\mathbf{s}$ and abbreviate $\mathbf{y}=\mathbf{y}_t^{\mathbf{s}}$.
By the tower property of conditional expectation,
\[
\mathbb{E}_{\mathbf{y}_t^0}
\mathbb{E}_{\mathbf{y}\mid\mathbf{y}_t^0}[\cdot]
=
\mathbb{E}_{\mathbf{y}\sim p^{\mathbf{s}}}
\mathbb{E}_{\mathbf{y}_t^0\mid\mathbf{y}}[\cdot].
\]
Hence the first objective can be written as
\[
\mathbb{E}_{\mathbf{y}\sim p^{\mathbf{s}}}
\mathbb{E}_{\mathbf{y}_t^0\mid\mathbf{y}}
\Big[
\|
\nabla_{\mathbf{y}} \log p^{\mathbf{s}}(\mathbf{y}\mid\mathbf{y}_t^0)
- \boldsymbol{\psi}(\mathbf{y})
\|^2
\Big].
\]
For each fixed $\mathbf{y}$, the minimizer of the inner expectation is the
conditional mean,
\[
\boldsymbol{\psi}^*(\mathbf{y})
=
\mathbb{E}_{\mathbf{y}_t^0\mid\mathbf{y}}
\big[
\nabla_{\mathbf{y}} \log p^{\mathbf{s}}(\mathbf{y}\mid\mathbf{y}_t^0)
\big].
\]
Using the identity $\nabla p = p \nabla \log p$, we compute
\begin{align*}
\nabla_{\mathbf{y}} p^{\mathbf{s}}(\mathbf{y})
&= \nabla_{\mathbf{y}}
\int p(\mathbf{y}_t^0)\, q^{\mathbf{s}}(\mathbf{y}\mid\mathbf{y}_t^0)\,
d\mathbf{y}_t^0 \\
&= \int p(\mathbf{y}_t^0)\,
p^{\mathbf{s}}(\mathbf{y}\mid\mathbf{y}_t^0)\,
\nabla_{\mathbf{y}} \log p^{\mathbf{s}}(\mathbf{y}\mid\mathbf{y}_t^0)\,
d\mathbf{y}_t^0 .
\end{align*}
Since
$p(\mathbf{y}_t^0)\,p^{\mathbf{s}}(\mathbf{y}\mid\mathbf{y}_t^0)
= p^{\mathbf{s}}(\mathbf{y})\,p(\mathbf{y}_t^0\mid\mathbf{y})$,
we obtain
\[
\nabla_{\mathbf{y}} \log p^{\mathbf{s}}(\mathbf{y})
=
\mathbb{E}_{\mathbf{y}_t^0\mid\mathbf{y}}
\big[
\nabla_{\mathbf{y}} \log p^{\mathbf{s}}(\mathbf{y}\mid\mathbf{y}_t^0)
\big],
\]
which proves the claim.
\end{proof}

\begin{theorem}[Minimizer of the multi-noise-level DSM objective]
\label{thm:dsm_minimizer}
The minimizer of the objective \eqref{eq3} is given (for almost every
$\mathbf{s}$) by
\[
\mathbf{\Phi}^*(\mathbf{y}_t^{\mathbf{s}},\mathbf{s})
=
\nabla_{\mathbf{y}} \log p^{\mathbf{s}}(\mathbf{y}_t^{\mathbf{s}}).
\]
\end{theorem}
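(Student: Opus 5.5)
The plan is to reduce the statement to Lemma~\ref{lem:tower_score}, applied pointwise in the noise levels. The function $\mathbf{\Phi}_\theta(\cdot,s_1,s_2)$ is allowed to depend on $\mathbf{s}$. We therefore treat the minimization over $\mathbf{\Phi}$ as a minimization over a family $\{\boldsymbol{\psi}_{\mathbf{s}}\}_{\mathbf{s}\in[0,S]^2}$ of measurable maps, with $\boldsymbol{\psi}_{\mathbf{s}}(\mathbf{y}) := \mathbf{\Phi}(\mathbf{y},s_1,s_2)$. We work over all measurable $\mathbf{\Phi}$, i.e.\ the nonparametric optimum; for a parametric class we would need realizability to conclude the same.

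First, by Fubini we write the objective \eqref{eq3} as
\[
\mathcal{L}(\mathbf{\Phi}) = \int_{[0,S]^2} \ell_{\mathbf{s}}(\boldsymbol{\psi}_{\mathbf{s}})\,\frac{d\mathbf{s}}{S^2},
\]
where $\ell_{\mathbf{s}}(\boldsymbol{\psi})$ is exactly the first objective in Lemma~\ref{lem:tower_score} at noise level $\mathbf{s}$.

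Second, we decompose $\ell_{\mathbf{s}}$ by completing the square around the conditional mean, as in the proof of the lemma:
\[
\ell_{\mathbf{s}}(\boldsymbol{\psi}) = C_{\mathbf{s}} + \mathbb{E}_{\mathbf{y}\sim p^{\mathbf{s}}}\big\|\nabla_{\mathbf{y}}\log p^{\mathbf{s}}(\mathbf{y})-\boldsymbol{\psi}(\mathbf{y})\big\|^2 .
\]
Here $C_{\mathbf{s}}$ is the expected conditional variance of $\nabla_{\mathbf{y}}\log p^{\mathbf{s}}(\mathbf{y}\mid\mathbf{y}_t^0)$ given $\mathbf{y}$, and it does not depend on $\boldsymbol{\psi}$. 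The identity $\nabla\log p^{\mathbf{s}}(\mathbf{y})=\mathbb{E}[\nabla\log p^{\mathbf{s}}(\mathbf{y}\mid\mathbf{y}_t^0)\mid\mathbf{y}]$ is established in Lemma~\ref{lem:tower_score}. It yields
\[
\mathcal{L}(\mathbf{\Phi}) = \bar C + \int \mathbb{E}_{p^{\mathbf{s}}}\|\nabla\log p^{\mathbf{s}}-\boldsymbol{\psi}_{\mathbf{s}}\|^2\,\frac{d\mathbf{s}}{S^2},
\]
and the integrand is nonnegative.

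Third, we conclude that the minimum value $\bar C$ is attained by $\mathbf{\Phi}^*(\mathbf{y},\mathbf{s})=\nabla_{\mathbf{y}}\log p^{\mathbf{s}}(\mathbf{y})$. Conversely, any minimizer must make the integrand vanish for a.e.\ $\mathbf{s}$ and $p^{\mathbf{s}}$-a.e.\ $\mathbf{y}$. Since the Gaussian-noised density $p^{\mathbf{s}}$ is positive everywhere for $s_1,s_2>0$, this gives uniqueness for a.e.\ $\mathbf{s}$, which is the ``almost every $\mathbf{s}$'' in the statement.

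The main obstacle is technical rather than conceptual, and it has two parts. The first is finiteness: we must ensure $\bar C<\infty$, so that the subtraction is legitimate. At the boundary $s_i\to 0$ the conditional score $\nabla\log p^{\mathbf{s}}(\mathbf{y}\mid\mathbf{y}_t^0)$ scales like $\Sigma_{\mathbf{s}}^{-1/2}$ and is not square-integrable uniformly. I would handle this by restricting to $\mathbf{s}\in[\epsilon,S]^2$, or by noting that the boundary has measure zero, which the a.e.\ clause absorbs. The second is measurability: the map $(\mathbf{y},\mathbf{s})\mapsto\nabla\log p^{\mathbf{s}}(\mathbf{y})$ must be jointly measurable so that $\mathbf{\Phi}^*$ is admissible. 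This follows from smoothness of the heat-kernel convolution in $(\mathbf{y},\mathbf{s})$.
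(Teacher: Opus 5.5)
Your argument is the paper's. Its proof is exactly the slicewise reduction you describe: \eqref{eq3} is a uniform average over $\mathbf{s}$ of the objectives in Lemma~\ref{lem:tower_score}, each slice is minimized by $\nabla_{\mathbf{y}}\log p^{\mathbf{s}}$, hence so is the average for a.e.\ $\mathbf{s}$. Your Tonelli/completing-the-square write-up, uniqueness via positivity of $p^{\mathbf{s}}$, measurability remark, and observation that the argument needs $\mathbf{\Phi}$ to depend on $\mathbf{s}$ (as it does in \eqref{eq3}) make the same proof more careful. The paper addresses none of these points.

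One correction to how you handle finiteness: your second remedy (``the boundary has measure zero, which the a.e.\ clause absorbs'') does not work. The problem is not the set $\{s_i=0\}$ but how fast $C_{\mathbf{s}}$ blows up near it. For a Gaussian kernel with covariance $\Sigma_{\mathbf{s}}$ one has $C_{\mathbf{s}}=\mathrm{tr}\,\Sigma_{\mathbf{s}}^{-1}-\mathbb{E}\|\nabla\log p^{\mathbf{s}}\|^2$. For the usual schedules with $\int_0^s g^2\sim c\,s$ this generically grows like $1/s_1+1/s_2$, which is not integrable on $[0,S]^2$. So $\bar C=\infty$ and \eqref{eq3} equals $+\infty$ for every $\mathbf{\Phi}$. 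Your ``minimum value $\bar C$'' is then meaningless, and your converse (``any minimizer must make the integrand vanish'') is vacuous.

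Any of the following repairs it:
\begin{itemize}
\item keep your truncation to $[\epsilon,S]^2$ and then let $\epsilon\downarrow 0$ along a sequence;
\item use the matrix weighting $\|\Sigma_{\mathbf{s}}^{1/2}(\cdot)\|^2$, which is what the $\epsilon$-prediction loss of Appendix~\ref{Training} amounts to;
\item most cleanly, define minimization through the excess risk $\int\mathbb{E}_{p^{\mathbf{s}}}\|\nabla\log p^{\mathbf{s}}-\boldsymbol{\psi}_{\mathbf{s}}\|^2\,d\mathbf{s}/S^2$, which is always well defined and vanishes exactly when $\boldsymbol{\psi}_{\mathbf{s}}=\nabla\log p^{\mathbf{s}}$ for a.e.\ $\mathbf{s}$.
\end{itemize}
With that fix, your proof is a strictly more rigorous version of the paper's.
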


\begin{figure*}
    \centering
    \includegraphics[width=1\linewidth]{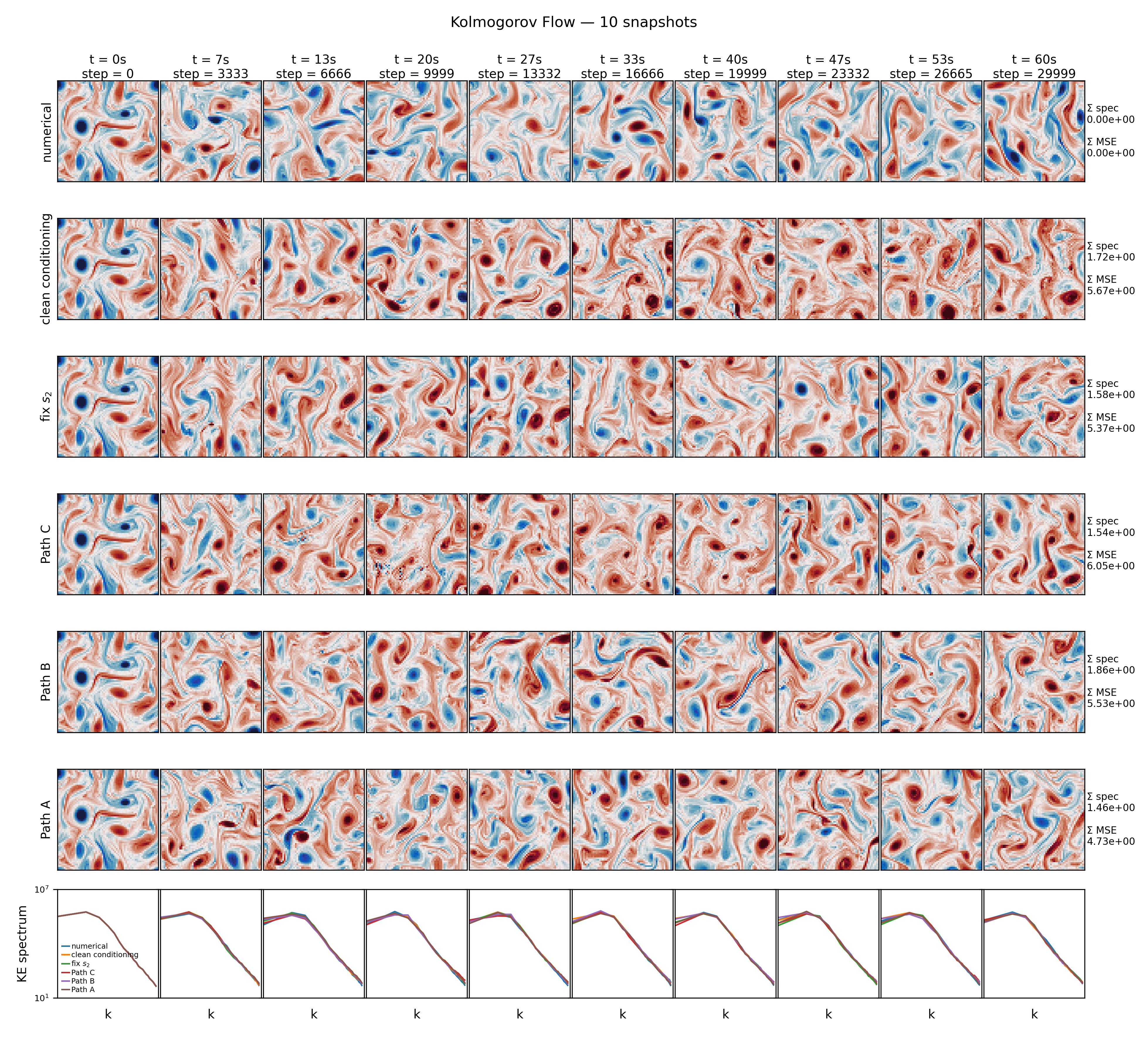}
    \caption{Each row in a rollout from the same initial condition using point-wise estimate surrogate and \textit{SNS} refinement, but each following a different \textit{traversal strategy}. The spectra are averaged over 20 snapshots at time t with independent initial conditions. On the right is the $L^2$ distance between the temporal mean of numerical fields and the mean of each run, and similarly for the $L^2$ distance between the densities. }
    \label{fig:traverse}
\end{figure*}

\begin{proof}
The objective \eqref{eq3} is an expectation over
$\mathbf{s}\sim\mathcal U(0,S)^2$ of the objectives considered in
Lemma~\ref{lem:tower_score}. Since each $\mathbf{s}$-slice is minimized by
$\nabla_{\mathbf{y}} \log p_{\mathbf{s}}(\mathbf{y})$, the full objective
is minimized by the same function almost everywhere in $\mathbf{s}$.
\end{proof}

\begin{corollary}[Equivalence with conditional scores]
\label{cor:conditional_scores}
For $\mathbf{y}=(\mathbf{x}_t^{s_1},\mathbf{x}_{t-1}^{s_2})$,
\[
\nabla_{\mathbf{y}} \log p^{\mathbf{s}}(\mathbf{y})
=
\big(
\nabla_{\mathbf{x}_t} \log p^{s_1}(\mathbf{x}_t^{s_1}\mid\mathbf{x}_{t-1}^{s_2}),
\;
\nabla_{\mathbf{x}_{t-1}} \log p^{s_2}(\mathbf{x}_{t-1}^{s_2}\mid\mathbf{x}_t^{s_1})
\big).
\]
Consequently, replacing the \textit{multi-noise-level score} in \eqref{eq3} by the
pair of conditional scores yields an objective with the same minimizer.
\end{corollary}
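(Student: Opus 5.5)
The plan is to prove the identity componentwise by factoring the joint density of the noised pair, and then to obtain the statement about minimizers from Theorem~\ref{thm:dsm_minimizer} together with a bias--variance decomposition of the squared loss.

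\textbf{Step 1 (well-posedness).} Fix $\mathbf{s}=(s_1,s_2)$ with $s_1,s_2>0$. Since $\mathbf{x}_t^{s_1}$ and $\mathbf{x}_{t-1}^{s_2}$ arise from independent forward processes started at $(\mathbf{x}_t,\mathbf{x}_{t-1})\sim p$, the density $p^{\mathbf{s}}$ is the joint law convolved with the nondegenerate Gaussian kernel $p^{\mathbf{s}}(\cdot\mid\mathbf{y}_t^0)$. Hence $p^{\mathbf{s}}$ is smooth and strictly positive. The same holds for both marginals $p^{s_2}(\mathbf{x}_{t-1}^{s_2})$ and $p^{s_1}(\mathbf{x}_t^{s_1})$. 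All logarithms and gradients below are therefore well defined. The excluded set $\{s_1=0\}\cup\{s_2=0\}$ has measure zero under $\mathcal U(0,S)^2$, which is consistent with the ``almost every $\mathbf{s}$'' qualifier in Theorem~\ref{thm:dsm_minimizer}.

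\textbf{Step 2 (the identity).} Apply Bayes' rule in its two orderings:
\[
\log p^{\mathbf{s}}(\mathbf{x}_t^{s_1},\mathbf{x}_{t-1}^{s_2}) = \log p^{s_1}(\mathbf{x}_t^{s_1}\mid\mathbf{x}_{t-1}^{s_2}) + \log p^{s_2}(\mathbf{x}_{t-1}^{s_2}),
\]
\[
\log p^{\mathbf{s}}(\mathbf{x}_t^{s_1},\mathbf{x}_{t-1}^{s_2}) = \log p^{s_2}(\mathbf{x}_{t-1}^{s_2}\mid\mathbf{x}_t^{s_1}) + \log p^{s_1}(\mathbf{x}_t^{s_1}).
\]
Take $\nabla_{\mathbf{x}_t}$ of the first decomposition. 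The marginal term does not depend on $\mathbf{x}_t$, so it vanishes, giving the first block. Take $\nabla_{\mathbf{x}_{t-1}}$ of the second decomposition to get the second block in the same way. Stacking the two blocks gives the displayed identity.

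\textbf{Step 3 (equivalence of minimizers).} By Theorem~\ref{thm:dsm_minimizer} (via Lemma~\ref{lem:tower_score}), for a.e.\ $\mathbf{s}$ the objective \eqref{eq3} equals
\[
\mathbb{E}_{\mathbf{y}\sim p^{\mathbf{s}}}\|\nabla_{\mathbf{y}}\log p^{\mathbf{s}}(\mathbf{y})-\mathbf{\Phi}_\theta\|^2 + C,
\]
where $C$ is the $\theta$-independent conditional variance of the target $\nabla_{\mathbf{y}}\log p^{\mathbf{s}}(\mathbf{y}\mid\mathbf{y}_t^0)$ given $\mathbf{y}$. This follows from the orthogonality of the residual to functions of $\mathbf{y}$. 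By Step 2, the regression target in this rewritten form coincides pointwise with the target in $\mathcal{L}_{\mathrm{cond}}$. Hence $\mathcal{L}_{\mathrm{joint}}(\theta)=\mathcal{L}_{\mathrm{cond}}(\theta)+C'$ with $C'$ independent of $\theta$. The argmin sets therefore agree over any class $\Theta$, not only over all measurable functions.

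\textbf{Main obstacle.} The main care point is the cross-term cancellation in Step 3, which requires square-integrability of the conditional score target. For Gaussian noising kernels this target is affine in the noise, with finite second moment whenever $p$ has finite second moment. I would state this as a standing assumption. I would also note that $\mathbf{\Phi}_\theta$ receiving $s_1,s_2$ explicitly is exactly what makes the decomposition hold slice by slice in $\mathbf{s}$.
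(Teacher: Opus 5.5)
Your Step 2 is the paper's argument: factor $\log p^{\mathbf{s}}$ into a conditional plus a marginal and note that the marginal has zero gradient in the other block. Step 1 adds positivity and smoothness, which the paper takes for granted.

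You differ on the ``consequently'' clause. The paper leaves it implicit and rests on Theorem~\ref{thm:dsm_minimizer}. There the unrestricted minimizer of \eqref{eq3} is $\nabla_{\mathbf{y}}\log p^{\mathbf{s}}$, which by the identity is the pair of conditional scores, and that pair trivially minimizes $\mathcal{L}_{\mathrm{cond}}$. This only shows that the two objectives share a minimizer over all measurable functions, or over a class containing the true score. Your bias--variance decomposition shows instead that $\mathcal{L}_{\mathrm{joint}}-\mathcal{L}_{\mathrm{cond}}$ is a $\theta$-independent constant. That gives equal $\arg\min$ sets over an arbitrary, possibly misspecified class $\Theta$. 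This is the form the paper asserts just before Lemma~\ref{lem:tower_score}, but its proof only covers the unrestricted case, so your route is genuinely stronger.

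Two adjustments are needed.

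First, your closing remark is wrong. The cross term vanishes after conditioning on $(\mathbf{s},\mathbf{y})$, and $\mathbf{\Phi}_\theta(\mathbf{y})$ is measurable with respect to that pair whether or not it takes $\mathbf{s}$ as input. So the constant-offset identity also covers networks that see only the noisy states, which is what SNS does. Explicit $\mathbf{s}$-input matters only for the unrestricted minimizer to equal the slice score.

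Second, the integrability condition you need is not a moment assumption on $p$.
\begin{itemize}
\item On each slice, with $\mathbf{\Sigma}_{\mathbf{s}}$ the covariance of the Gaussian noising kernel, the DSM target is $-\mathbf{\Sigma}_{\mathbf{s}}^{-1/2}$ applied to a standard Gaussian vector. It is therefore square-integrable for any $p$, and $C(\mathbf{s})=\operatorname{tr}\mathbf{\Sigma}_{\mathbf{s}}^{-1}-\mathbb{E}\|\nabla_{\mathbf{y}}\log p^{\mathbf{s}}\|^2$.
\item The real issue is that $C(\mathbf{s})$ blows up like $\operatorname{tr}\mathbf{\Sigma}_{\mathbf{s}}^{-1}$ as $s_i\to 0$, for example when the data have a density of finite Fisher information.
\item Hence $C'=\mathbb{E}_{\mathbf{s}}C(\mathbf{s})=\infty$ for the unweighted uniform-$\mathbf{s}$ objective whenever $\sigma_s^2$ vanishes linearly at $0$. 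In that case $\mathcal{L}_{\mathrm{joint}}\equiv\infty$ and the $\arg\min$ comparison over $\Theta$ is vacuous.
\end{itemize}
Any of three fixes restores $C'<\infty$, and then your conclusion stands:
\begin{itemize}
\item argue slice by slice;
\item use the same $\mathbf{s}$-dependent weighting in both objectives, e.g.\ the blockwise $\mathbf{\Sigma}_{\mathbf{s}}$-weighting implicit in the $\epsilon$-prediction loss;
\item use the discrete schedule $s\in\{1,\dots,S\}$ that the paper actually trains with.
\end{itemize}
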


\begin{proof}
By the factorization
\[
\log p_{\mathbf{s}}(\mathbf{x}_t^{s_1},\mathbf{x}_{t-1}^{s_2})
=
\log p^{s_1}(\mathbf{x}_t^{s_1}\mid\mathbf{x}_{t-1}^{s_2})
+
\log p^{s_2}(\mathbf{x}_{t-1}^{s_2}),
\]
the second term does not depend on $\mathbf{x}_t^{s_1}$, yielding the first
identity. The second follows analogously.
\end{proof}

\begin{corollary}
    Minimizer of conditional approximation error requires clean conditioning.
\end{corollary}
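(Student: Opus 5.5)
The plan is to make the corollary precise as a statement about conditioning on a corrupted version of $\mathbf{x}_{t-1}$, and then reduce it to the data processing inequality and the Markov chain structure of the forward noising.

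\emph{Setup.} Fix $s_2>0$. Since the forward noise on $\mathbf{x}_{t-1}$ is drawn independently of $\mathbf{x}_t$, we have the Markov chain
\[
\mathbf{x}_t \to \mathbf{x}_{t-1} \to \mathbf{x}_{t-1}^{s_2}.
\]
Consider any model $\hat p(\mathbf{x}_t\mid \mathbf{c})$ that predicts $\mathbf{x}_t$ from a conditioning variable $\mathbf{c}$. We compare two choices: $\mathbf{c}=\mathbf{x}_{t-1}$ and $\mathbf{c}=\mathbf{x}_{t-1}^{s_2}$. Both are scored by the conditional approximation error with $\mathcal{D}=\mathrm{KL}$, averaged under $\mu$:
\[
\mathcal{E}_{\text{cond}}=\mathbb{E}_{\mathbf{x}_{t-1}\sim\mu}\,\mathrm{KL}\big(p(\cdot\mid\mathbf{x}_{t-1})\,\|\,\hat p(\cdot\mid\cdot)\big).
\]

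\emph{Step 1: optimal error for each conditioning.}
\begin{itemize}
\item With clean conditioning, $\hat p=p(\mathbf{x}_t\mid\mathbf{x}_{t-1})$ is attainable, so the minimum is $0$.
\item With noisy conditioning, the model is a function of $\mathbf{x}_{t-1}^{s_2}$ only. The error is the expected log-loss minus the fixed quantity $H(\mathbf{x}_t\mid\mathbf{x}_{t-1})$, which does not depend on $\hat p$. By Gibbs' inequality, applied pointwise in $\mathbf{x}_{t-1}^{s_2}$, the log-loss is minimized by the Bayes posterior $p^{s_2}(\mathbf{x}_t\mid\mathbf{x}_{t-1}^{s_2})$.
\end{itemize}
The noisy conditional is the one the annealed/multi-noise-level score targets, by Corollary~\ref{cor:conditional_scores}. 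At this minimizer the error equals
\[
H(\mathbf{x}_t\mid\mathbf{x}_{t-1}^{s_2})-H(\mathbf{x}_t\mid\mathbf{x}_{t-1}) = I(\mathbf{x}_t;\mathbf{x}_{t-1}) - I(\mathbf{x}_t;\mathbf{x}_{t-1}^{s_2}).
\]

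\emph{Step 2: data processing.} Along the Markov chain above, the data processing inequality gives $I(\mathbf{x}_t;\mathbf{x}_{t-1}^{s_2})\le I(\mathbf{x}_t;\mathbf{x}_{t-1})$. Hence the best achievable error under noisy conditioning is $\ge 0$. It is strictly positive unless $\mathbf{x}_{t-1}^{s_2}$ is a sufficient statistic of $\mathbf{x}_{t-1}$ for $\mathbf{x}_t$.

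\emph{Step 3: strictness (main obstacle).} The difficulty is ruling out equality for $s_2>0$. I would argue as follows.
\begin{itemize}
\item Gaussian noise with positive variance is not invertible. The conditional law of $\mathbf{x}_{t-1}$ given $\mathbf{x}_{t-1}^{s_2}$ has full support, being a Gaussian convolution weighted by the prior.
\item So $p(\mathbf{x}_t\mid\mathbf{x}_{t-1}^{s_2})=\int p(\mathbf{x}_t\mid\mathbf{x}_{t-1})\,p(\mathbf{x}_{t-1}\mid\mathbf{x}_{t-1}^{s_2})\,d\mathbf{x}_{t-1}$ is a nondegenerate mixture.
\item This mixture equals $p(\mathbf{x}_t\mid\mathbf{x}_{t-1})$ a.s. only if the transition kernel is $\mu$-a.s. constant in $\mathbf{x}_{t-1}$, i.e.\ $I(\mathbf{x}_t;\mathbf{x}_{t-1})=0$.
\end{itemize}
That case is excluded for any dynamics with temporal dependence, and in the small-$\Delta\tau$ Dirac regime of Section~\ref{sec3} the kernel is clearly non-constant.

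Hence, whenever the dynamics carry information, the minimizer of $\mathcal{E}_{\text{cond}}$ requires $s_2=0$. The error is also monotone in $s_2$, because the noising process forms a Markov chain in $s_2$ and data processing applies again between any two noise levels. For the proof I would assume enough regularity that conditional densities exist and the entropies are finite.
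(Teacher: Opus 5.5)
Your proposal is correct and follows essentially the paper's argument. The posterior predictive $p(\mathbf{x}_t\mid\mathbf{x}_{t-1}^{s_2})$ is the optimal noisy-conditioned predictor, and your gap $I(\mathbf{x}_t;\mathbf{x}_{t-1})-I(\mathbf{x}_t;\mathbf{x}_{t-1}^{s_2})$ equals, by the chain rule along $\mathbf{x}_t\to\mathbf{x}_{t-1}\to\mathbf{x}_{t-1}^{s_2}$, the conditional mutual information $I(X_t;X_{t-1}\mid\tilde{X}_{t-1})$ the paper writes down, with the same sufficient-statistic equality condition. Your Step 3 goes beyond the paper, which only says this quantity is ``typically'' positive: because the Gaussian likelihood is strictly positive, the joint law of $(\mathbf{x}_{t-1},\mathbf{x}_{t-1}^{s_2})$ is equivalent to $\mu\otimes\mathrm{Leb}$, so conditional independence given $\mathbf{x}_{t-1}^{s_2}$ forces $\mathbf{x}_t\perp\mathbf{x}_{t-1}$ (state it this way rather than as ``full support'', since $\mu$ may be concentrated on an attractor).
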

Suppose $d$ is the conditional KL divergence:
\begin{equation}
\label{eq:cond_kl_risk}
\mathcal{R}_{\mathrm{cond}}(q)
:=
\mathbb{E}_{x\sim \mu}\Big[
\mathrm{KL}\big(p(\cdot\mid x)\,\|\,q(\cdot\mid x)\big)
\Big],
\end{equation}
where $x=x_{t-1}$ and $p(\cdot\mid x)=p(x_t\mid x_{t-1})$.
It is immediate that the unique minimizer (a.e.\ in $x$) is $q^*(\cdot\mid x)=p(\cdot\mid x)$, with minimum value $0$.

Now suppose we do \emph{not} condition on the clean state $x$, but only on a
corrupted observation $\tilde{x}\sim r(\tilde{x}\mid x)$ (e.g.\ Gaussian
noising). Any predictor based on the corrupted input has the form
$q(x_t\mid \tilde{x})$. The best achievable predictor in this restricted class
is:
\begin{equation}
\label{eq:posterior_predictive}
q^*(x_t\mid \tilde{x}) \;=\; p(x_t\mid \tilde{x})
\;:=\; \int p(x_t\mid x)\,p(x\mid \tilde{x})\,dx.
\end{equation}

Moreover, the \emph{irreducible gap} between conditioning on $x$ and on
$\tilde{x}$ is exactly a conditional mutual information:
\begin{equation}
\label{eq:gap_cmi}
\inf_{q(\cdot\mid \tilde{x})}\;
\mathbb{E}\Big[
\mathrm{KL}\big(p(\cdot\mid x)\,\|\,q(\cdot\mid \tilde{x})\big)
\Big]
\;=\;
\mathbb{E}\Big[
\mathrm{KL}\big(p(\cdot\mid x)\,\|\,p(\cdot\mid \tilde{x})\big)
\Big]
\;=\;
I(X_t;X_{t-1}\mid \tilde{X}_{t-1}),
\end{equation}
where the expectation is under the joint law
$p(x_{t-1})\,p(x_t\mid x_{t-1})\,r(\tilde{x}_{t-1}\mid x_{t-1})$.

In particular, if the corruption is non-degenerate, then typically
$I(X_t;X_{t-1}\mid \tilde{X}_{t-1})>0$, so conditioning on $\tilde{x}_{t-1}$
cannot attain the minimum of \eqref{eq:cond_kl_risk}. Equality holds if and only
if $\tilde{X}_{t-1}$ is a sufficient statistic for $X_{t-1}$ with respect to
$X_t$, i.e.\ $X_t \perp X_{t-1}\mid \tilde{X}_{t-1}$; the degenerate case
$\tilde{X}_{t-1}=X_{t-1}$ recovers the clean-conditioning optimum.

\begin{figure*}
    \centering
    \includegraphics[width=0.95\linewidth]{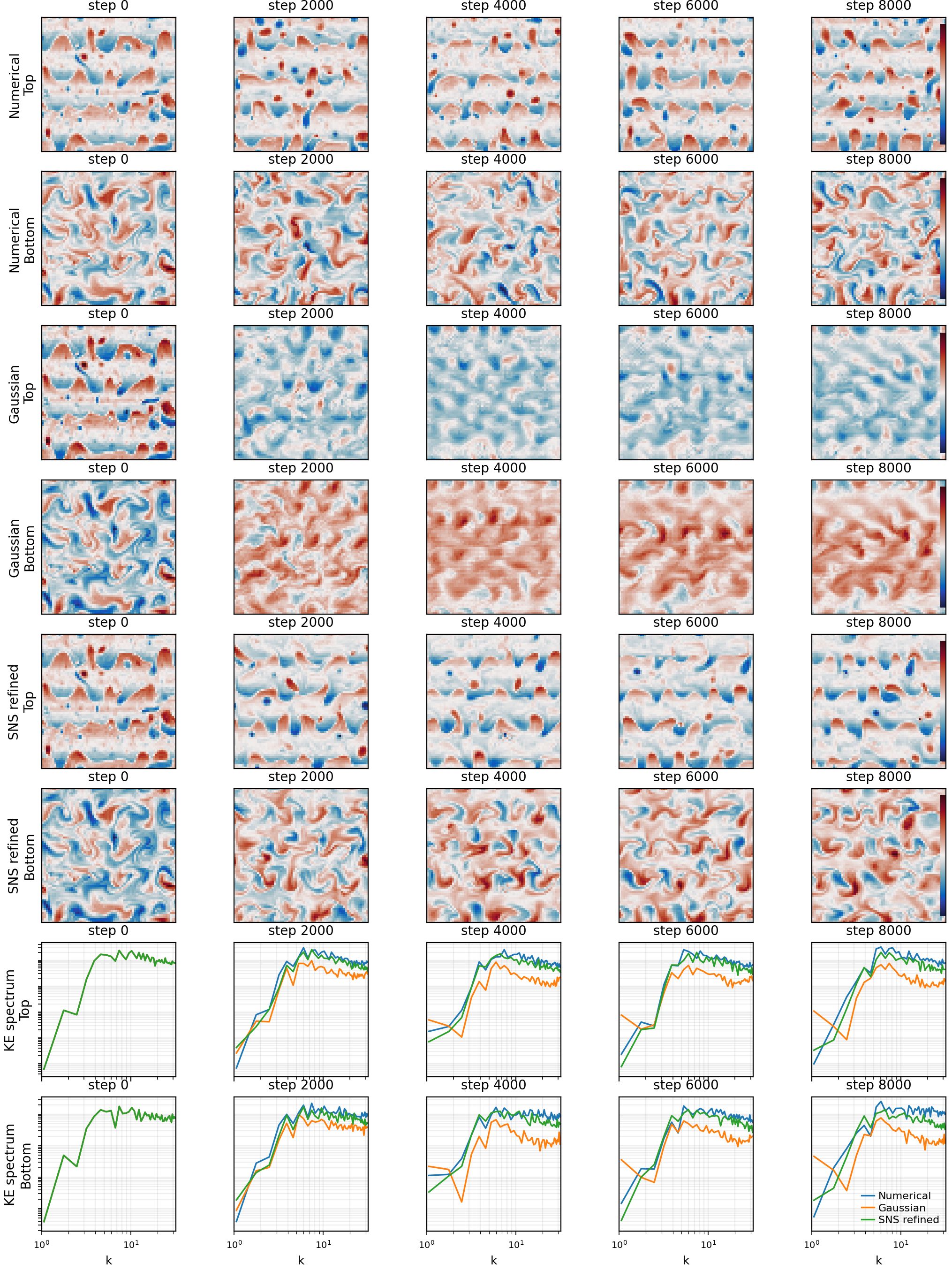}
    \caption{2 layer Quasigeostrophic Turbulence with \textit{SNS} varients}
    \label{fig:QG}
\end{figure*}


\end{document}